\newcommand{\ie}{\textit{i.e.}}
\renewcommand{\mid}{\text{mid}}
\renewcommand{\bar}{\overline}
\newcommand{\rev}[1]{{\color{black}{#1}}}
\newtheorem{lemma}{Lemma}
\newtheorem{corollary}{Corollary}
\newtheorem{theorem}{Theorem}
\begin{document}

\title{Active Target Tracking with Self-Triggered Communications in Multi-Robot Teams}

\author{Lifeng~Zhou,~\IEEEmembership{Student Member,~IEEE,}
        and~Pratap~Tokekar,~\IEEEmembership{Member,~IEEE}
\thanks{The authors are with the Department
of Electrical and Computer Engineering, Virginia Tech, Blacksburg,
VA, 24060 USA. (e-mail: lfzhou@vt.edu; tokekar@vt.edu).}
}

%
%



\maketitle

\begin{abstract}
We study the problem of reducing the amount of communication in decentralized target tracking. We focus on the scenario where a team of robots is allowed to move on the boundary of the environment. Their goal is to seek a formation so as to best track a target moving in the interior of the environment. The robots are capable of measuring distances to the target. Decentralized control strategies have been proposed in the past that guarantee that the robots asymptotically converge
to the optimal formation. However, existing methods require that the robots exchange information with their neighbors at all time steps. Instead, we focus on decentralized strategies to reduce the amount of communication among robots. 

We propose a self-triggered communication strategy that decides when a particular robot should seek up-to-date information from its neighbors and when it is safe to operate with possibly outdated information. We prove that this strategy converges asymptotically to the desired formation when the target is stationary. For the case of a mobile target, we use decentralized Kalman filter with covariance intersection to share the beliefs of neighboring robots. We evaluate all the approaches through simulations and a proof-of-concept experiment.

\vspace{2mm}
\emph{Note to Practitioners}---We study the problem of tracking a target using a team of coordinating robots. Target tracking problems are prevalent in a number of applications such as co-robots, surveillance, and wildlife monitoring. Coordination between robots typically requires communication amongst them. Most multi-robot coordination algorithms implicitly assume that the robots can communicate at all timesteps. Communication can be a considerable source of energy consumption, especially for small robots. Furthermore, communicating at all timesteps may be redundant in many settings. With this as motivation, we propose an algorithm where the robots do not necessarily communicate at all times, and instead choose specific triggering time instances to share information with their neighbors. Despite the limitation of limited communication, we show that the algorithm converges to the optimal configuration, both in theory as well as in simulations. 
\end{abstract}

\begin{IEEEkeywords}
multi-robot systems, target tracking, networked control.
\end{IEEEkeywords}

%
\IEEEpeerreviewmaketitle

\section{Introduction}
Target tracking is one of the more well-studied problems in the robotics community~\cite{bar2004estimation} and finds many applications such as surveillance~\cite{rao1993fully,dhillon2003sensor,grocholsky2006cooperative}, crowd monitoring~\cite{tokekar2014multi,dames2015detecting}, and wildlife monitoring~\cite{dunbabin2012environmental,tokekar2013tracking}. We study \emph{active} target tracking with a team of robots where the focus is on actively controlling the state of the robot. The robots can exchange information with each other and then decide how to move, so as to best track the target. It is typically assumed that exchanging information is beneficial. It is typical to design strategies by assuming that the robots will exchange their information at each time step irrespective of whether that information is worth exchanging. In this paper, we investigate the problem of deciding when is it worthwhile for the robots to exchange information and when is it okay to use possibly outdated information.  

The motivation for our work stems from the observation that communication can be costly. For example, for smaller robots, radio communication can be a significant source of power consumption. The robots can extend their lifetime by reducing the time spent communicating (equivalently, number of messages sent). Our goal is thus to determine a strategy that communicates only when required without considerably affecting the tracking performance.

We study this problem in a simple target tracking scenario first introduced by Martinez and Bullo~\cite{martinez2006optimal}. Here, the robots are restricted to move on the boundary of a convex environment. They can obtain distance measurements towards a target moving in the interior. The goal of the robots is to position themselves so as to maximize the information gained \rev{from the target}. \rev{Our problem setup models scenarios where the robots cannot enter into the interior of the environment. For example, Pierson et al.~\cite{pierson2016cooperative} studied pursuit-evasion where the pursuers are not allowed to enter ``no-fly zones''. If the evader enters a ``no-fly zone'' then the pursuers reposition themselves on the perimeter of a convex approximation of the zone. Another motivating application is that of tracking radio-tagged fish~\cite{tokekar2013tracking} using ground robots that can move only along the boundary of the environment.}

The authors \rev{in~\cite{martinez2006optimal}} proposed a decentralized strategy where the robots communicate at all time steps with their neighbors and proved that it converges to the optimal (uniform) configuration. Instead, we apply a self-triggered coordination algorithm (following recent works~\cite{nowzari2012self,heemels2012introduction}) where each robot decides when to trigger communications with its neighbors. We apply this strategy to the aforementioned target tracking scenario and compare its performance relative to the constant strategy in simulations.

Next, we study the problem where the robots obtain noisy measurements of the distance to the targets. In a decentralized setting, robots can exchange information only with their neighbors. As a result, their local estimates of the target's position may differ considerably, resulting in poor tracking especially when the robots are not in a uniform configuration. We show how to use a decentralized Kalman filter estimator that fuses the beliefs shared by neighboring robots (at triggered instances) to a common estimate. 

\rev{Our main results assume that the robots have sufficiently large communication and sensing ranges. In Section~\ref{sec:limited}, we present necessary conditions on the sensing and communication ranges for our results to hold. We also sufficient conditions for a modified version of the self-triggered strategy to guarantee convergence.}

Simulation results validate the theoretical analysis showing that the self-triggered strategy converges to the optimal, uniform configuration. The average number of communication is less than 30\% that of the constant strategy. We also demonstrate the performance of the algorithm through proof-of-concept experiments with five simulated and two actual robots coordinating with each other.

The rest of the paper is organized as follows. \rev{We start by surveying the related works in Section~\ref{sec:related_work}. We formalize the problem in Section~\ref{sec:probform}.  The self-triggered tracking strategy is presented in Section~\ref{sec:self}, assuming that the target's position is known and is fixed. We relax these assumptions and present two  practical extensions in Section~\ref{sec:practical} for noisy measurements and limited sensing and communication ranges.} The simulation results are presented in Section~\ref{sec:sims}. We conclude with a discussion of future work in Section~\ref{sec:conc}.

A preliminary version of this paper was first presented in~\cite{zhou2017active} without the decentralized Kalman filter with covariance intersection and the analysis for the limited communication and sensing ranges (Section~\ref{sec:practical}), the Gazebo simulation experiments, and the proof-of-concept experiment (Section~\ref{sec:sims}). 

\rev{\section{Related Work} \label{sec:related_work}
Multi-robot target tracking has been widely studied in robotics~\cite{robin2016multi,khan2016cooperative}. Robin and Lacroix~\cite{robin2016multi} surveyed multi-robot target detection and tracking systems and presented a taxonomy of relevant works. Khan et al.~\cite{khan2016cooperative} classified and discussed control
techniques for multi-robot multi-target monitoring and identify the major elements of this problem. Hausman et al.~\cite{hausman2015cooperative} proposed a centralized cooperative approach for a team of robots to estimate a moving target. They showed how to use onboard sensing with limited sensing range and switch the sensor topology for effective target tracking. Dias et al.~\cite{dias2015decentralized} proposed a  multi-robot triangulation method to deal with  initialization and data association issues in  bearing-only sensors. The robot communicates locally to exchange and update the estimate beliefs of the target by a decentralized filter. Franchi et al.~\cite{franchi2016decentralized} presented a decentralized strategy to ensure that the robots follow the target while moving around it in a circle. They assume that the robots are labeled. Similar to our work, the robots attempt to maintain a uniform distribution on a (moving) circle around the target. However, unlike our work, they require that the robots constantly communicate with their local neighbors.

Sung et al.~\cite{sung2017distributed} proposed a distributed approach for multi-robot assignment problem for multi-target tracking by taking both sensing and communication ranges into account. The goal of their work is also to limit the communication between the robots. However, they do so by limiting the number of messages sent at each timestep but allow the robots to communicate at all timesteps. Instead, our work explicitly determines when to trigger communication with other robots.

Our work builds on event-triggered and self-triggered communication schemes studied primarily by the controls community~\cite{tabuada2007event,heemels2012introduction}. Dimarogonas et al.~\cite{dimarogonas2012distributed} presented both centralized and decentralized event-triggered strategies for the agreement problem in multi-agent systems. They extended the results to a self-triggered communication setting where the robot calculates its next communication time based on the previous one, without monitoring the state error. Nowzari and Cort{\'e}s~\cite{nowzari2012self} proposed a decentralized self-triggered coordination algorithm for the optimal deployment of a group of robots based on  spatial partitioning techniques. The synchronous version of this algorithm converges comparatively with an all-time communication strategy. 

To the best of our knowledge, our paper is the first to simultaneously handle both robot coordination~\cite{franchi2016decentralized} and target tracking~\cite{dias2015decentralized}. We focus on applying self-triggered control to reduce the amount of local communication between neighbors.
}

\section{Problem Formulation} \label{sec:probform}
Consider a group of $N$ robots moving on the boundary of a convex polygon $\mathcal{Q}\subset \mathbb{R}^{2}$. Let $\partial\mathcal{Q}$ denote the boundary of $\mathcal{Q}$. The robots are tasked with tracking a target with position $o$ located in the interior of $\mathcal{Q}$. Let $p_{1},...,p_{N}$ denote the positions of the robots. We can map any point on $\partial\mathcal{Q}$ to a unit circle $\mathbb{T}$ using the transformation $\varphi_{o}:\partial\mathcal{Q}\to \mathbb{T}$ given by
\begin{equation}
\varphi_{o}(p)=\frac{p-o}{\|p-o\|}
\label{eqn:transformation}
\end{equation}
 \noindent as shown in Figure~\ref{fig:Map}. We identify every robot's position with the corresponding point on the unit circle. That is, $p_{i}\in \partial\mathcal{Q}\subset\mathbb{R}^{2}$ is identified with $\theta_{i}=\varphi_{o}(p_{i})\in \mathbb{T}$, indicating the location on the circle $\mathbb{T}$ of robot $i$. Let $\theta=(\theta_{1},...,\theta_{N})\in\mathbb{T}^{N}$ denote the vector of locations of all robots. 

\begin{figure}
\centering
\includegraphics[width=0.5\columnwidth]{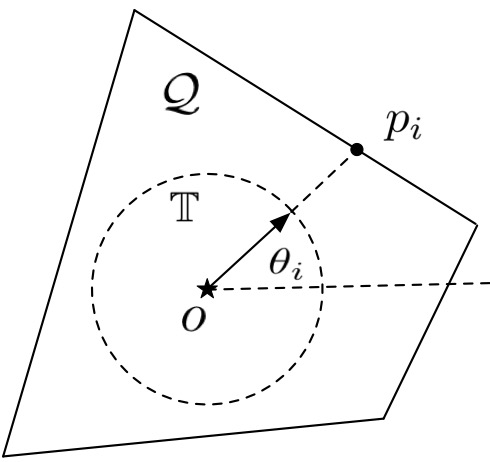}
\caption{The mapping from convex boundary $\partial\mathcal{Q}$ to unit circle $\mathbb{T}$.\label{fig:Map}}
\end{figure}

We assume that all robots follow simple first-order continuous-time motion model. Each robot $i$ knows its own position exactly at all times. When two robots communicate they can exchange their respective positions. \rev{We also assume that all robots have sensors that cover the environment, and can always communicate with their neighbors (i.e., robot $i$ can communicate with $i+1$ and $i-1$). In Section~\ref{sec:limited}, we derive necessary and sufficient conditions of the sensing and communication range.}

Let $\omega_{\max}$ denote the common maximum angular speed\footnote{Strictly speaking, each robot has a maximum speed with which it can move on $\partial\mathcal{Q}$. In Appendix~\ref{app:omega}, we show how the maximum speed on $\partial\mathcal{Q}$ can be used to determine $\omega_{\max}$.} for all robots on the unit circle. Our results can be extended to the situation where each robot has its own maximum angular speed. 

Martinez and Bullo~\cite{martinez2006optimal} showed that the optimal configuration for the robots that can obtain distance measurements towards the target is a uniform configuration along the circle where each pair of neighboring robots is equally spaced around the target. That is, $\theta_{i+1}-\theta_{i}=2\pi/N, \forall i\in \{1,...,N\}$. Optimality is defined with respect to maximizing the determinant of the Fisher Information Matrix (FIM). FIM is a commonly used measure for active target tracking. Martinez and Bullo~\cite{martinez2006optimal} presented a decentralized control law that is guaranteed to (asymptotically) converge to a uniform configuration when a robot is allowed to communicate with only two of its immediate neighbors. That is, a robot $i$ can communicate with only $i-1$ and $i+1$, along the circle. The analysis requires that all robots know the position of the target exactly and that the target remains stationary. In the same paper, they showed how to apply the same control law in situations where the target's position is not known exactly and is instead estimated by combining noisy range measurements in an Extended Kalman Filter. They also evaluated the performance of the algorithm empirically in cases where the target is allowed to move.

The control law in~\cite{martinez2006optimal} assumes that neighboring robots communicate at every time step. We call this the \emph{constant strategy}. Our objective in this work is to reduce the number of communications between the robots while still maintaining the convergence properties. We present a \emph{self-triggered strategy} where the control law for each robot not only decides how a robot should move, but also when it should  communicate with its neighbors and seek new information. We show that the proposed self-triggered strategy is also guaranteed to converge to a uniform configuration, under the model and assumptions described in this section. 

%
%

\section{Self-Triggered Tracking Algorithm}	\label{sec:self}
In this section, we present the self-triggered tracking algorithm for achieving a uniform configuration along the unit circle. This requires knowing the center of the circle (\ie, the target's position) and assuming that this center does not change. These assumptions are required for the convergence analysis to hold. We later relax these assumptions and present a practical version in the following section.

Our algorithm builds on the \texttt{self-triggered}\\ \texttt{centroid algorithm}~\cite{nowzari2012self} which is a decentralized control law that achieves optimal deployment (\ie, uniform Voronoi partitions) in a convex environment. We suitably modify this algorithm for the cases where the robots are restricted to move only on the boundary, $\partial\mathcal{Q}$, and can communicate with only two neighbors as described in the previous section. We first present the control law for each of the robots that uses the motion prediction set of its neighbors based on their last known positions. Then, we present an update policy to decide when a robot should communicate and seek new information from its neighbors. 

\subsection{Control Law}
The constant control law in~\cite{martinez2006optimal} drives every robot towards the midpoint of its Voronoi segment. The Voronoi segment of the robot $i$ is the part of the unit circle extending from $(\theta_{i-1}+\theta_{i})/2$ to $(\theta_{i}+\theta_{i+1})/2$. The constant control law steers robot $i$ towards the midpoint of its Voronoi segment $V_{\mid}^{i}$ by using real-time (at every time step\footnote{\rev{Denote one time step as a small time interval, $\Delta t$.}}) information from its neighbors, $\theta_{i-1}$ and $\theta_{i+1}$, as illustrated in Figure~\ref{fig:Cer}. We refer to the book \cite{okabe2009spatial} for a comprehensive treatment on Voronoi segment.

\begin{figure}
\centering
\includegraphics[width=0.65\columnwidth]{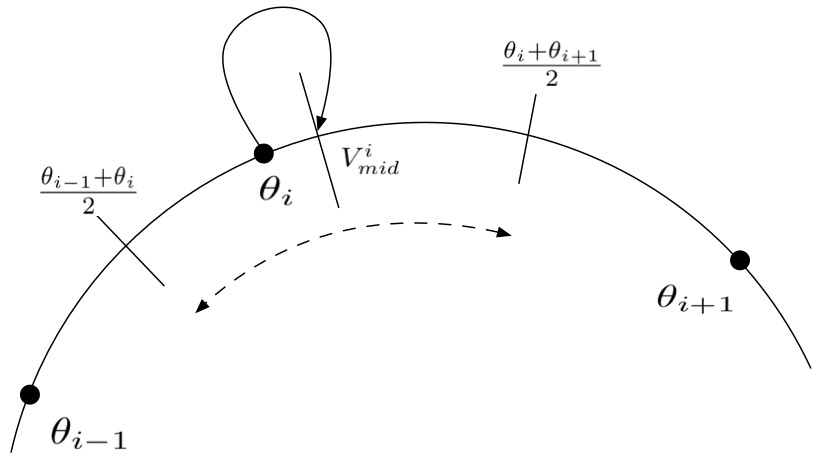}
\caption{Robot $i$ goes towards the midpoint of its Voronoi segment via exact information from its neighbors.\label{fig:Cer}}
\end{figure}


In distributed self-triggered strategies, exact positions of the neighbors is not always available in real-time. Consequently, the algorithm must be able to operate on this inexact information. The information that each robot $i$ holds about its neighbor $j$ is the last known position of $j$, denoted by $\theta_{j}^{i}$, and the time elapsed since the position of robot $j$ was collected, denoted by $\tau_{j}^{i}$. Based on this, robot $i$ can compute the furthest distance that $j$ could have moved in $\tau_{j}^{i}$ time as $\pm\phi_{j}^{i}$ where,
\begin{equation}
\phi_j^i =\omega_{\max} \tau_{j}^{i}.
\end{equation}
Thus, robot $i$ can build a prediction motion set $\mathcal{R}_{j}^{i}(\theta_{j}^{i}, \phi_{j}^{i})$ that  contains all the possible locations where robot $j$ could have moved to in $\tau_{j}^{i}$ time (Figure~\ref{fig:Rij}).

\begin{figure}
\centering
\includegraphics[width=0.5\columnwidth]{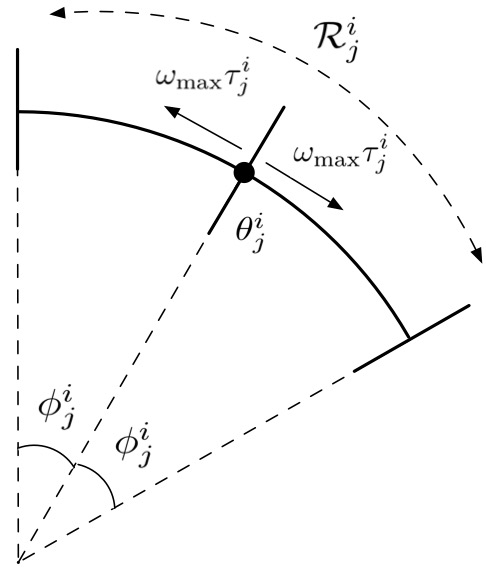}
\caption{Motion prediction set, $\mathcal{R}_j^i$, that each robot $i$ maintains for its neighbors $j$. $\theta_j^i$ is the last known position of robot $j$ and $\tau_j^i$ is the time elapsed since this last known position.\label{fig:Rij}}
\end{figure}


\rev{In our algorithm, it is sufficient for robot $i$ to only communicate with its neighbors $i-1$ and $i+1$.} The prediction motion range that robot $i$ stores is given as 
$\mathcal{R}^{i}:=\{\mathcal{R}_{i-1}^{i}(\theta_{i-1}^{i}, \phi_{i-1}^{i}), \mathcal{R}_{i+1}^{i}(\theta_{i+1}^{i}, \phi_{i+1}^{i})\}$. 
%
%





The proposed self-triggered strategy uses these motion prediction ranges $\mathcal{R}^{i}$ for defining the control law of robot $i$. Since the robot has inexact information of its neighbors, the midpoint of its Voronoi segment is a set instead of a point (Figure~\ref{fig:Unc}).

\begin{figure}
\centering
\includegraphics[width=0.9\columnwidth]{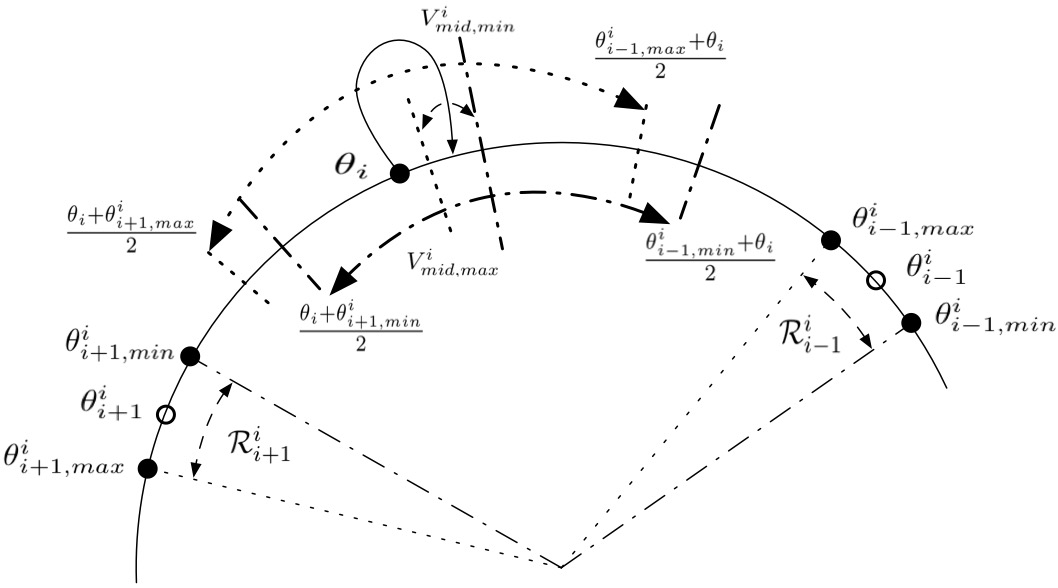}
\caption{Robot $i$ goes towards the midpoint of its Voronoi segment via inexact motion prediction about its neighbors.\label{fig:Unc}}
\end{figure}


Define:
\begin{align*}
\theta_{i-1,\min}^{i}&=\left(\theta_{i-1}^{i}-\phi_{i-1}^{i}\right)
&\theta_{i-1,\max}^{i}=\left(\theta_{i-1}^{i}+\phi_{i-1}^{i}\right)\\
\theta_{i+1, \min}^{i}&=\left(\theta_{i+1}^{i}-\phi_{i+1}^{i}\right)
&\theta_{i+1, \max}^{i}=\left(\theta_{i+1}^{i}+\phi_{i+1}^{i}\right).
\end{align*} 
Thus, we have:
\begin{align*}
\mathcal{R}_{i-1}^{i}(\theta_{i-1}^{i},  \phi_{i-1}^{i}) &=\{\beta\in \mathbb{T} |\theta_{i-1,\min}^{i}\leq \beta \leq \theta_{i-1,\max}^{i}\},\\
\mathcal{R}_{i+1}^{i}(\theta_{i+1}^{i}, \phi_{i+1}^{i})&=\{\beta\in \mathbb{T} |\theta_{i+1, \min}^{i} \leq \beta \leq \theta_{i+1, \max}^{i}\}.
\end{align*}

Then the minimum and maximum midpoints of robot $i$'s Voronoi segment can be computed as,
\begin{align}
V_{\mid,\min}^{i}&=\frac{(\theta_{i-1,\min}^{i}+\theta_{i})/2+(\theta_{i}+\theta_{i+1,\min}^{i})/2}{2},\label{eqn:vmidmin}\\
V_{\mid,\max}^{i}&=\frac{(\theta_{i-1,\max}^{i}+\theta_{i})/2+(\theta_{i}+\theta_{i+1,\max}^{i})/2}{2}.\label{eqn:vmidmax}
\end{align}

The midpoint of its Voronoi segment

\noindent $V_{\mid}^{i}\in[V_{\mid,\min}^{i},V_{\mid,\max}^{i}]$. That is,
\begin{equation}
V_{\mid,\min}^{i} \leq V_{\mid}^{i} \leq V_{\mid,\max}^{i}.\label{eqn:vmid}
\end{equation}

\noindent Substitute Equations~\ref{eqn:vmidmin} and \ref{eqn:vmidmax} into Equation~\ref{eqn:vmid} yields,
\begin{equation*}
\frac{\theta_{i+1}^{i}+2\theta_{i}+\theta_{i-1}^{i}-2\omega_{\max}\tau^{i}}{4} \leq V_{\mid}^{i}
\end{equation*} and 
\begin{equation*}
V_{\mid}^{i} \leq \frac{\theta_{i+1}^{i}+2\theta_{i}+\theta_{i-1}^{i}+2\omega_{\max}\tau^{i}}{4},
\end{equation*} 
then \begin{equation*}
-\frac{\omega_{\max}\tau^{i}}{2}\leq V_{\mid}^{i}-\frac{\theta_{i+1}^{i}+2\theta_{i}+\theta_{i-1}^{i}}{4} \leq\frac{\omega_{\max}\tau^{i}}{2}.
\end{equation*}
Therefore,
\begin{equation}
\left|V_{\mid}^{i}-\frac{\theta_{i+1}^{i}+2\theta_{i}+\theta_{i-1}^{i}}{4}\right| \leq  \frac{\omega_{\max}\tau^{i}}{2}.\label{eqn:ubd}
\end{equation}
 
Thus, the angular distance between $V_{\mid}^{i}$ and

\noindent$\frac{\theta_{i+1}^{i}+2\theta_{i}+\theta_{i-1}^{i}}{4}$ is bounded by $\frac{\omega_{\max}\tau^{i}}{2}$. In fact, the point $\frac{\theta_{i+1}^{i}+2\theta_{i}+\theta_{i-1}^{i}}{4}$ indicates the midpoint of $i$'s guaranteed Voronoi segment $gVs_{i}$, defined as,
\begin{equation*}
gVs_{i}=\left\lbrace\beta\in \mathbb{T} \left|\max_{\theta_{i}\in S_{i}}\left|\beta-\theta_{i}\right| \leq \min_{\theta_{j}\in S_{j}}\left|\beta-\theta_{j}\right|,\forall  j\neq i\right.\right\rbrace
\end{equation*}
where $T_{1},\ldots,T_{n} \subset \mathbb{T}$ are a set of connected segments in $\mathbb{T}$. We refer to the report\cite{evans2008guaranteed} for more details on the guaranteed Voronoi segment. Thus, the guaranteed Voronoi segment of robot $i$ can be computed as,
\begin{equation}
gVs_{i}=\left\lbrace\beta \left| \frac{\theta_{i}+\theta_{i+1,\min}^{i}}{2} \leq \beta \leq \frac{\theta_{i-1,\max}^{i}+\theta_{i}}{2}\right.\right\rbrace.
\end{equation}
Although robot $i$ does not know the exact midpoint of its Voronoi segment $V_{\mid}^{i}$, it can move towards the midpoint of its guaranteed Voronoi segment $gV_{\mid}^{i}$ instead, which is given by,
\begin{eqnarray}
gV_{\mid}^{i}&=&\frac{(\theta_{i}+\theta_{i+1,\min}^{i})/2+(\theta_{i-1,\max}^{i}+\theta_{i})/2}{2},\nonumber\\
&=& \frac{\theta_{i+1}^{i}+2\theta_{i}+\theta_{i-1}^{i}}{4}.
\end{eqnarray}

In general, moving towards $gV_{\mid}^i$ does not guarantee that the robot moves closer to the midpoint of its Voronoi segment. However, the statement holds under the following condition.
\begin{lemma}
Suppose robot $i$ moves from $\theta_i$ towards $gV_{\mid}^i$. Let $\theta_i'$ be its position after one time step. If $|\theta_i' - gV_{\mid}^i| \geq |V_{\mid}^i - gV_{\mid}^i|$, then $|\theta_i'-V_{\mid}'| \leq |\theta_i-V_{\mid}^i|$.
\end{lemma}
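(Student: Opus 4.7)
The plan is to reduce the claim to a one-dimensional geometric fact on an arc, exploit the bound from Equation~\eqref{eqn:ubd}, and then use a short monotonicity argument.

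First I would introduce signed arc-length coordinates along the relevant piece of $\mathbb{T}$, centered at $gV_{\mid}^i$. Without loss of generality, place $gV_{\mid}^i$ at $0$ and suppose $\theta_i<0$, so that ``moving towards $gV_{\mid}^i$'' means moving in the positive direction along the arc. Write $\theta_i=-d$ with $d=|\theta_i-gV_{\mid}^i|>0$, and $\theta_i'=-d+s$ where $s\in[0,d]$ is the arc-length traveled in one time step; I am tacitly ruling out overshoot of $gV_{\mid}^i$ itself, which is the usual assumption for the control law and is not substantive for the argument (if the robot reaches $gV_{\mid}^i$ it simply stops). The subtle point to handle here is that the geometry is circular, but since $\theta_i$, $\theta_i'$, $V_{\mid}^i$ and $gV_{\mid}^i$ all live inside a single Voronoi segment of bounded angular length, the local linearization on an arc is valid and all arithmetic below is ordinary signed arithmetic.

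Next I would invoke Equation~\eqref{eqn:ubd}, which gives $|V_{\mid}^i-gV_{\mid}^i|\le \omega_{\max}\tau^i/2$. Setting $\epsilon:=\omega_{\max}\tau^i/2$, this places $V_{\mid}^i\in[-\epsilon,\epsilon]$ in our coordinates. The hypothesis of the lemma, $|\theta_i'-gV_{\mid}^i|\ge|V_{\mid}^i-gV_{\mid}^i|$, becomes simply $d-s\ge |V_{\mid}^i|$, equivalently $-(d-s)\le -|V_{\mid}^i|$. Combined with the trivial inequality $-|V_{\mid}^i|\le V_{\mid}^i$, this yields
\begin{equation*}
\theta_i \;=\; -d \;\le\; \theta_i' \;=\; -(d-s) \;\le\; -|V_{\mid}^i| \;\le\; V_{\mid}^i.
\end{equation*}

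Finally, since $\theta_i\le\theta_i'\le V_{\mid}^i$ are all collinear on the arc in this order, we get
\begin{equation*}
|\theta_i'-V_{\mid}^i| \;=\; V_{\mid}^i-\theta_i' \;\le\; V_{\mid}^i-\theta_i \;=\; |\theta_i-V_{\mid}^i|,
\end{equation*}
which is the desired inequality. The case $\theta_i>0$ is symmetric under negation of the coordinate. The main obstacle, and the only place where one has to be careful, is the circular-geometry bookkeeping: justifying that all four points sit on a common arc short enough that signed arithmetic and the ordering ``$\theta_i\le\theta_i'\le V_{\mid}^i$'' make sense globally on $\mathbb{T}$. This is handled by the bound $\epsilon=\omega_{\max}\tau^i/2$ from Equation~\eqref{eqn:ubd} and the fact that $gV_{\mid}^i$ lies inside robot $i$'s Voronoi segment, so $\theta_i$, $\theta_i'$, $gV_{\mid}^i$ and $V_{\mid}^i$ are all contained in an arc of length less than $\pi$; after that point, the argument reduces to the two-line chain of inequalities above.
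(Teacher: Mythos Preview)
Your argument is correct. The paper itself does not give a proof of this lemma at all: it simply states that ``the proof for this lemma follows directly from the proof for Lemma~5.1 in~\cite{nowzari2012self}'' and moves on. What you have written is essentially the one-dimensional specialization of that argument, worked out explicitly: reduce to signed arc-length coordinates, use the hypothesis $|\theta_i'-gV_{\mid}^i|\ge|V_{\mid}^i-gV_{\mid}^i|$ to conclude $\theta_i'$ has not crossed $V_{\mid}^i$, and then read off the monotonicity $|\theta_i'-V_{\mid}^i|\le|\theta_i-V_{\mid}^i|$. Compared to the paper's bare citation, your version is self-contained and makes the geometric content of the lemma transparent; the only cost is the small amount of bookkeeping you flag about all four points living on a common short arc so that signed arithmetic is legitimate.

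Two minor remarks. First, your invocation of Equation~\eqref{eqn:ubd} and the quantity $\epsilon=\omega_{\max}\tau^i/2$ is not actually needed for the lemma: the hypothesis already hands you $|V_{\mid}^i-gV_{\mid}^i|$ directly, and the upper bound $\epsilon$ only enters later in Corollary~\ref{cor:ubd}. Second, the paper writes $V_{\mid}'$ in the conclusion, which you silently read as $V_{\mid}^i$; this is the natural reading and matches the cited Nowzari--Cort\'es lemma, but it is worth noting that you are implicitly treating the true midpoint as fixed over the single step.
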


The proof for this lemma follows directly from the proof for Lemma 5.1 in~\cite{nowzari2012self}. Consequently, as long as the robot can ensure that its new position $\theta_i'$ satisfies $|\theta_i' - gV_{\mid}^i| \geq |V_{\mid}^i - gV_{\mid}^i|$, then it is assured to not increase its distance from the actual (unknown) midpoint of the Voronoi segment. However, the right-hand side of this condition also is not known exactly since robot $i$ does not know $V_{\mid}^i$. Instead, we can set an upper bound on this term using Equation~\ref{eqn:ubd}. We denote this upper bound by $\mathrm{ubd}_{i}:=\frac{\omega_{\max}\tau^{i}}{2}$. Thus, we get the following result:

\begin{corollary}

Suppose robot $i$ moves from $\theta_i$ towards $gV_{\mid}^i$. Let $\theta_i'$ be its position after one time step. If 
\begin{equation}
|\theta_i' - gV_{\mid}^i| > \mathrm{ubd}_{i},
\label{eqn:ubdcondition}
\end{equation}

then $|\theta_i'-V_{\mid}'| \leq |\theta_i-V_{\mid}^i|$.
\label{cor:ubd}
\end{corollary}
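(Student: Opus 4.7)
The plan is to derive Corollary~\ref{cor:ubd} as an almost-immediate consequence of Lemma 1, by observing that $\mathrm{ubd}_i$ is, by its very construction, an upper bound on $|V_{\mid}^i - gV_{\mid}^i|$. First I would recall Equation~\ref{eqn:ubd}, which establishes
$$\left|V_{\mid}^{i}-\frac{\theta_{i+1}^{i}+2\theta_{i}+\theta_{i-1}^{i}}{4}\right| \leq \frac{\omega_{\max}\tau^{i}}{2},$$
and then invoke two identifications made in the surrounding text: the point $(\theta_{i+1}^{i}+2\theta_{i}+\theta_{i-1}^{i})/4$ is exactly the guaranteed-Voronoi midpoint $gV_{\mid}^i$, and the right-hand side is exactly $\mathrm{ubd}_i$. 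Hence Equation~\ref{eqn:ubd} rewrites as $|V_{\mid}^i - gV_{\mid}^i| \leq \mathrm{ubd}_i$.

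Next I would chain this with the hypothesis of the corollary. Combining $|\theta_i' - gV_{\mid}^i| > \mathrm{ubd}_i$ with $\mathrm{ubd}_i \geq |V_{\mid}^i - gV_{\mid}^i|$ yields
$$|\theta_i' - gV_{\mid}^i| > |V_{\mid}^i - gV_{\mid}^i|,$$
which in particular satisfies the non-strict inequality required by Lemma 1. Invoking Lemma 1 then immediately delivers the desired conclusion $|\theta_i' - V_{\mid}'| \leq |\theta_i - V_{\mid}^i|$.

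I do not expect a genuine obstacle here, since Lemma 1 performs all of the real geometric work (showing that a step towards $gV_{\mid}^i$ contracts the distance to the unknown midpoint $V_{\mid}^i$ whenever the step is large relative to the uncertainty), and the corollary merely replaces the unknowable quantity $|V_{\mid}^i - gV_{\mid}^i|$ in that hypothesis by the locally computable upper bound $\mathrm{ubd}_i=\omega_{\max}\tau^i/2$. The only mild point worth noting is the passage from the strict inequality in the hypothesis to the non-strict inequality needed by Lemma 1, which is trivial. Consequently the proof collapses to roughly two lines of bookkeeping on top of Lemma 1.
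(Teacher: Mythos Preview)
Your proposal is correct and matches the paper's reasoning exactly: the paper does not give a separate proof of the corollary but simply observes, in the text immediately preceding it, that Equation~\ref{eqn:ubd} yields $|V_{\mid}^i - gV_{\mid}^i|\le \mathrm{ubd}_i$, so the hypothesis of the corollary forces the hypothesis of Lemma~1 and the conclusion follows. Your remark about the strict-to-nonstrict passage is the only point you add beyond what the paper makes explicit, and it is indeed trivial.
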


Next, we present a motion control law that steers the robots towards a uniform configuration on the circle. Intuitively, robot $i$ computes its guaranteed Voronoi segment (7) using the last known positions of its neighbors and the real-time position of itself. It then computes the midpoint of its guaranteed Voronoi segment (8) and moves towards the midpoint until it is within $\mathrm{ubd}_{i}$ of it. Formally, the control, $u_{i}(t_{k})$, for robot $i$ at time $t_k$ is given by:
\begin{equation}
u_{i}(t_{k})=\omega_{i}\;\mathrm{unit}(gV_{\mid}^{i}-\theta_{i}),
\label{eqn:controllaw}
\end{equation}
where,
$$\omega_{i}=
\begin{cases}
\omega_{\max},& \left|gV_{\mid}^{i}-\theta_{i}\right|\geq \mathrm{ubd}_{i}+\omega_{\max}\Delta t,\\
0,& \left|gV_{\mid}^{i}-\theta_{i}\right|\leq \mathrm{ubd}_{i},\\
\frac{\left|gV_{\mid}^{i}-\theta_{i}\right|- \mathrm{ubd}_{i}}{\Delta t},&  \mathrm{otherwise}.
\end{cases}$$

%
%
%
%
%
%
%
%
%
%
%
%

\subsection{Triggering Policy}
As time elapses, without new information the upper bound $\mathrm{ubd}_{i}$ grows larger until the condition in Equation~\ref{eqn:ubdcondition} is not met. This triggers the robot to collect the updated information from its neighbors. There are two causes that may lead to the condition in Equation~\ref{eqn:ubdcondition} being violated. The upper bound on the right-hand side, $\mathrm{ubd}_{i}$, might grow large because of the time elapsed since the last communication occurred. Or, robot $i$ might move close to $gV_{\mid}^{i}$ which would require $\mathrm{ubd}_{i}$ to become small by acquiring new information. The second scenario might lead to frequent triggering when the robots are close to convergence. We introduce a user-defined tolerance parameter, $\sigma\geq 0$,  to relax the triggering condition. Whenever the following condition is violated, the robot is required to trigger new communication:
\begin{equation}
\mathrm{ubd}_{i} < \max\{\|\theta'-gV_{\mid}^{i}\|, \sigma \}
\label{eqn:ubdconditiontolerance}
\end{equation}

Furthermore, the motion control law is designed under the assumption that the robot $i$ and its two neighbors are located in the counterclockwise order. That is, $\theta_{i+1}> \theta_{i} > \theta_{i-1}$. Since the robots are identical, it is clear that there is no advantage gained by changing the order of robots along the circle. In a constant strategy, since the robots always communicate, they know the real-time position of their neighbors and can thus avoid the order being swapped. In a self-triggered strategy, however, we only have a motion prediction set of the neighbors. If there is a possibility that this order may be violated, the robots must communicate and avoid it. We achieve this by requiring the robot to maintain the following condition:
\begin{equation}
\theta_{i+1}^{i}-\omega_{\max}\tau_{i+1}^{i}> \theta_{i} > \theta_{i-1}^{i}+\omega_{\max}\tau_{i-1}^{i}
\label{eqn:thetacond}
\end{equation}

This ensures that even in the worst case, the robots have not swapped their positions. Whenever there is a possibility of this condition being violated, the robot $i$ triggers a new communication.
%
%
%
%
%
%
%
%
%
%
%
%
%
%
%
%
%
%
%
%

The complete self-triggered midpoint strategy is presented below:

\hangafter 1
\hangindent 1.5em

\rule{0.9\columnwidth}{1.5pt}

$\textbf{Algorithm 1:}$ \textsc{Self-triggered Midpoint}

\rule{0.9\columnwidth}{1pt}

~1: \textbf{while} all robots have not converged:

~2: ~~~\textbf{for} each robot $i\in\{1,...,N\}$ perform: 

~3: ~~~~~~increment $\tau_{i-1}^i$ and $\tau_{i+1}^i$ by $\Delta t$

~4: ~~~~~~compute $\mathcal{R}^{i}, gVs_{i}, gV_{\mid}^{i},$ and $\mathrm{ubd}_{i}$



~5: ~~~~~~\textbf{if} Equation~\ref{eqn:ubdconditiontolerance} OR Equation~\ref{eqn:thetacond} is violated:

~6: ~~~~~~~~~trigger communication with $i+1$ and $i-1$

~7: ~~~~~~~~~reset $\tau_{i+1}^i$ and $\tau_{i-1}^i$ to zero

~8: ~~~~~~~~~recompute $\mathcal{R}^{i}, gVs_{i}, gV_{\mid}^{i},$ and $\mathrm{ubd}_{i}$

~9: ~~~~~~\textbf{end if}

10: ~~~~~~compute and apply $u_{i}$ as defined in Equation~\ref{eqn:controllaw}

11: ~~~\textbf{end for}

12: \textbf{end while}

\rule{0.9\columnwidth}{1pt}

\subsection{Convergence Analysis} 

Algorithm 1 is guaranteed to converge asymptotically to a uniform configuration along the circumference of the circle, irrespective of the initial configuration, assuming that no two robots are co-located initially. The proof for the convergence follows directly from the proof of Proposition 6.1 in~\cite{nowzari2012self} with suitable modifications. In the following, we sketch these modifications.

In~\cite{nowzari2012self} the robots are allowed to move anywhere in the interior of $Q\subset \mathbb{R}^2$ whereas in our case the robots are restricted to move on $\partial Q$, equivalent to moving on the unit circle $\mathbb{T}$. Therefore, all the $L_2$ distances in the proof in \cite{nowzari2012self} change to $L_1$ distances. Instead of moving to the midpoint of the guaranteed Voronoi segment, the robots in~\cite{nowzari2012self} move to the centroid of a guaranteed Voronoi region. Instead of communicating with the two clockwise and counter-clockwise neighbors, the robots in~\cite{nowzari2012self} communicate with all possible Voronoi neighbors. None of these changes affect the correctness of the proof. We add an extra condition that triggers communications to prevent robots from changing their order along $\mathbb{T}$. Since this condition only results in additional triggers, it can only help convergence. Finally, since there is a one-to-one and onto mapping between $\partial Q$ and $\mathbb{T}$, convergence along $\mathbb{T}$ implies convergence along $\partial Q$.

\section{Practical Extensions} \label{sec:practical}
\rev{In this section, we present two practical extensions of our algorithm relaxing some of the assumptions given in Section~\ref{sec:probform}.}
\subsection{Tracking of Moving Target with Noisy Measurements} 
If the true position of the target, $o^*$, is known, then we can draw a unit circle centered at the target and use the strategy in Algorithm 1 to converge to a uniform configuration along the circle. According to the result in~\cite{martinez2006optimal} this configuration maximizes the determinant of the FIM. In practice, however, we do not know the true position of the target. In fact, the goal is to use the noisy measurements from the robots to estimate the position of the target. Furthermore, the target may be mobile. This implies that the (unknown) center of the circle is also moving, further complicating the control strategy for the robots.

We use an Extended Kalman Filter (EKF) that estimates the position of the target (\ie, center) and predicts its motion at every time step. The prediction and the estimate of the target from an EKF is a 2D Gaussian distribution parameterized by its mean, $\hat{o}(k)$ and covariance $\hat{\Sigma}(k)$. \rev{The target's state prediction and update by EKF are described below.

\noindent\emph{Prediction:}
$$\hat{o}^{-}(k)= \hat{o}(k-1),$$
$$\hat{\Sigma}^{-}(k) = \hat{\Sigma}(k-1)+R(k).$$

\noindent\emph{Update:}
$$K(k) = \hat{\Sigma}^{-}(k) H^{T}(k)(H(k)\hat{\Sigma}^{-}(k)H^{T}(k) + Q(k))^{-1},$$
$$\hat{o}(k) = \hat{o}^{-}(k) + K(k)(z(k)-h(\hat{o}^{-}(k)))$$
$$\hat{\Sigma}(k)= (I-K(k)H(k))\hat{\Sigma}(k)^{-}$$
where $R(k)$ and $Q(k)$ are the covariance matrices of the noise from target's motion model and robot's measurement, respectively. $h(\hat{o}^{-}(k)):=\|p(k)-\hat{o}^{-}(k)\|_2$. $z(k)$ denotes the noisy distance measurement from the robot. $H(k)$ is the Jacobean of $h(\hat{o}^-(k))$.} At each time step, we use the mean of the latest estimate as the center of the circle to compute the $\theta_i$ values using the transformation in Equation~\ref{eqn:transformation}. 

In the centralized setting, a common fusion center can obtain the measurements from all the robots and compute a single target estimate, $\hat{o}(k)$ at every time step. Therefore, each robot will have the same estimated mean, $\hat{o}(k)$, and therefore the same center for the unit circle. However, in the decentralized case, each robot runs its own EKF estimator and has its own target estimate, $\hat{o}^i(k)$, based on only its own measurements of the target. As a result, the centers of the unit circle will not be the same, making convergence challenging. 

If at any time step, a robot communicates with its neighbors, then it can also share its estimate (mean $\hat{o}(k)$ and covariance $\hat{\Sigma}(k)$) with its neighbors. Therefore, at these triggered instances, each robot can update its own estimate by fusing the estimates from its neighbors. We use the covariance intersection algorithm, which is a standard decentralized EKF technique, to fuse estimates under unknown corrections~\cite{reinhardt2012closed}. 

The covariance intersection algorithm takes two Gaussian beliefs, ($x_a,\Sigma_a$) and ($x_b,\Sigma_b$), and combines them into a common belief, ($x_c,\Sigma_c$):
\begin{eqnarray*}
&&x_c = \Sigma_c((\Sigma_a)^{-1}x_a+(\Sigma_b)^{-1})^{-1}x_b)\\
&&\Sigma_c = (\lambda(\Sigma_a)^{-1} + (1-\lambda)(\Sigma_b)^{-1})^{-1}
\label{eqn:covariace_intersection}
\end{eqnarray*}
Here, $\lambda\in[0,1]$ is a  design parameter obtained by optimizing some criteria, i.e.,  determinant or trace of $\Sigma_c$.

The rest of the process is similar to that in Algorithm 1. The centralized EKF scheme is a baseline which we compare against for the more realistic decentralized strategy. The results are presented in the simulation section that follows.

\rev{
\subsection{Limited Communication and Sensing Range} \label{sec:limited}
Our main result assumes that the robots have sufficiently large communication and sensing ranges. In this section, we first present a necessary condition for the communication range $r_c$ and sensing range $r_s$. We then present a sufficient condition on the communication range for a modified version of our algorithm.

\begin{theorem}[Necessary Condition]
Let $N$ be the total number of robots. To guarantee the convergence to the optimal configuration when the robots do not know $N$, the communication range $r_c$  cannot be less than $ D_{\text{in}}\sin\frac{\pi}{N}$ and the sensing range $r_s$ cannot be less than $\frac{D_{\text{in}}}{2}$. $D_{\text{in}}$ indicates the diameter of the largest radius circle contained completely inside the environment.
\label{thm:nessary_cond}
\end{theorem}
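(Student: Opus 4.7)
The plan is to prove both bounds by exhibiting worst-case environments and target placements in which, if the stated inequality were violated, convergence to the uniform configuration would be impossible. Both constructions revolve around the largest inscribed circle of $\mathcal{Q}$, which has radius $D_{\text{in}}/2$ and, by maximality, must be tangent to $\partial\mathcal{Q}$ at one or more points.

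For the sensing-range bound, I would place the target $o$ at the center of the largest inscribed disk inside $\mathcal{Q}$. Since this disk lies entirely in $\mathcal{Q}$, every boundary point is at distance at least $D_{\text{in}}/2$ from $o$, so if $r_s < D_{\text{in}}/2$ no robot can ever acquire a range measurement to the target. The self-triggered algorithm relies on the angular coordinates $\theta_i = \varphi_o(p_i)$, which require knowing (or estimating) $o$; with zero measurements collected anywhere on $\partial\mathcal{Q}$, no estimator can recover $o$, and the optimal uniform configuration is therefore neither definable nor reachable from the robots' perspective.

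For the communication-range bound, I would specialize the environment to the extremal case in which $\mathcal{Q}$ itself is a disk of diameter $D_{\text{in}}$, so that the largest inscribed circle coincides with $\mathcal{Q}$ and $\partial\mathcal{Q}$ is a circle of radius $D_{\text{in}}/2$. With the target at the center, the optimal uniform configuration has consecutive robots separated by central angle $2\pi/N$, so the chord distance between adjacent robots is $2\cdot(D_{\text{in}}/2)\cdot\sin(\pi/N) = D_{\text{in}}\sin(\pi/N)$. If $r_c < D_{\text{in}}\sin(\pi/N)$, adjacent robots lie out of range throughout a full neighborhood of the optimal configuration. For Algorithm 1 the mechanism of failure is transparent: the staleness $\tau^i_j$ of each unreachable neighbor and hence $\phi^i_j = \omega_{\max}\tau^i_j$ grow unboundedly, the guaranteed Voronoi segment degenerates, and Equation~\ref{eqn:controllaw} eventually forces $\omega_i = 0$ before the uniform configuration is attained.

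The main obstacle is promoting the second argument from ``Algorithm 1 fails'' to ``no decentralized strategy can succeed,'' which is what the universal quantifier in a necessary condition actually requires. I would close this gap by an indistinguishability argument invoking the hypothesis that $N$ is unknown: for two disk environments with populations $N$ and $N'$ whose optimal chord lengths both exceed $r_c$, any robot that only hears messages from within $r_c$ generates the same local observation history in both scenarios, so no single strategy can simultaneously realize the $2\pi/N$ and the $2\pi/N'$ uniform spacings, contradicting the desired guarantee of convergence in every admissible instance.
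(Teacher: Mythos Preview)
Your proposal is correct and follows the same core construction as the paper: place the target at the center of the largest inscribed disk, observe that every boundary point is then at distance at least $D_{\text{in}}/2$ (giving the sensing bound), and compute the chord length $D_{\text{in}}\sin(\pi/N)$ between uniformly spaced neighbors (giving the communication bound). The paper's proof is terse and simply asserts that ``the robots must be able to communicate with both neighbors when they reach a uniform configuration''; you go further in two respects. First, by specializing $\mathcal{Q}$ to a disk you make the chord computation exact---the paper applies the cosine law on the inscribed circle even though the robots sit on $\partial\mathcal{Q}$, which is only literally correct in your disk case, so your version is actually cleaner for a necessary-condition argument. Second, your indistinguishability argument using the unknown-$N$ hypothesis is precisely what is needed to upgrade the claim from ``Algorithm~1 fails'' to ``no decentralized strategy can succeed,'' a step the paper leaves implicit. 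Your Algorithm-1-specific failure mechanism (unbounded $\tau^i_j$, degenerate guaranteed Voronoi segment) is not needed for the theorem and the paper omits it, but it does no harm.
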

\begin{proof}
\begin{figure*}[htb]
\centering{
\subfigure[]{\includegraphics[width=0.5\columnwidth]{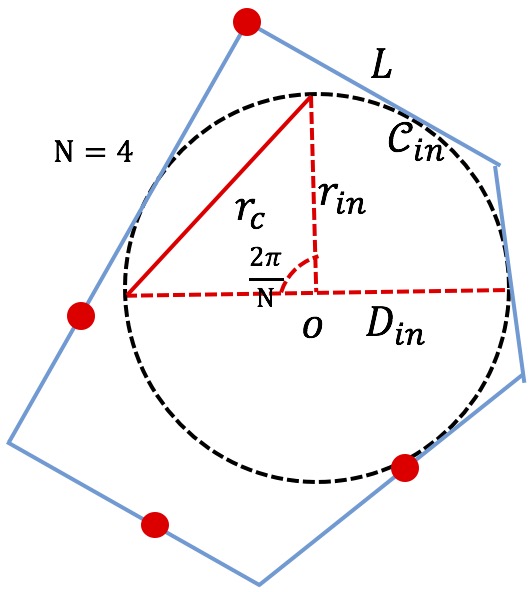}}
\subfigure[]{\includegraphics[width=0.5\columnwidth]{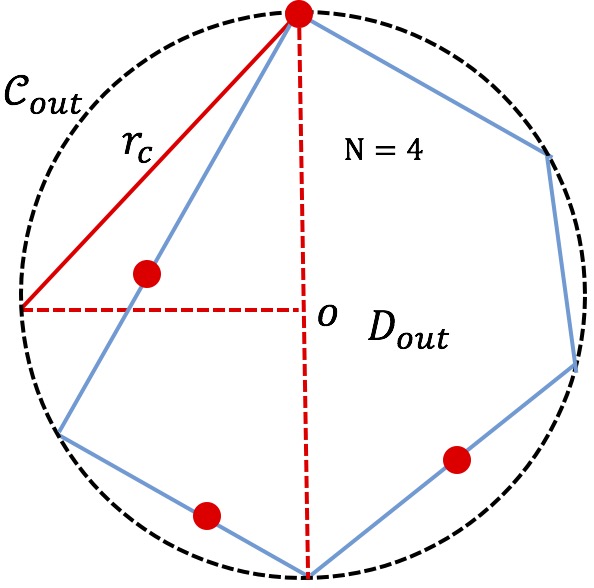}}
\subfigure[]{\includegraphics[width=0.5\columnwidth]{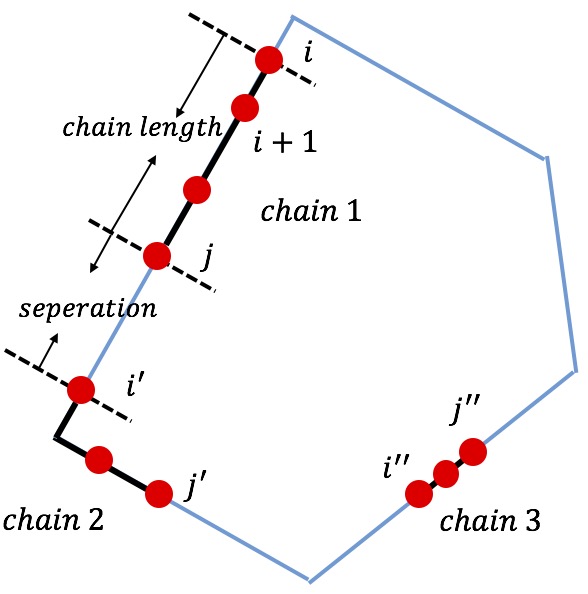}}
\caption{The example of four robots moving on an arbitrary convex boundary to show necessary condition: (a) and sufficient condition: (b) and (c). The red solid circle indicates the robot. 
\label{fig:necessary_sufficient}}}
\end{figure*}

Consider an arbitrary convex boundary as shown in Figure~\ref{fig:necessary_sufficient}-(a). We draw its inscribed circle $\mathcal{C}_{\text{in}}$ with radius $r_{\text{in}}$ and diameter $D_{\text{in}}$. To guarantee convergence without knowing $N$, the robots must be able to communicate with both neighbors when they reach a uniform configuration. When $N=4$ (Figure~\ref{fig:necessary_sufficient}-(a)), if the communication range among any two robots, $r_c < \sqrt{2}r_{\text{in}} = \frac{\sqrt{2}D_{\text{in}}}{2}$, these four robots cannot communicate with each other even when they are at the uniform configuration. For any $N$, $r_c$ can be calculated by using the cosine law,
$$r_{c}^{2} = 2r_{\text{in}}^{2}-2r_{\text{in}}^{2}\cos\frac{2\pi}{N}.$$ Thus, $$r_{c} = 2r_{\text{in}}\sin\frac{\pi}{N} = D_{\text{in}}\sin\frac{\pi}{N}.$$ Thus, we obtain the necessary condition that $r_{c}$ cannot be less than $D_{\text{in}}\sin\frac{\pi}{N}$. 

If $r_s < r_{\text{in}}$, no robot can sense the target when the target is at the center of the circle. Thus the sensing range $r_s$ cannot be less than $\frac{D_{\text{in}}}{2}$.

\label{pro:necessary_cond}
\end{proof}

Next, we propose a sufficient condition on the communication and sensing ranges to ensure convergence to the uniform configuration. We need to make an additional assumption that each robot can uniquely identify its forward and backward neighbors. We also assume that the communication range is the same for all the robots and is known to all the robots. We present a modified version of our strategy that works with limited communication range.

\noindent\emph{Modified Self-Triggered Strategy: If robot $i$ cannot communicate with either of its two neighbors, it does not move. If robot $i$ can only communicate with one of its neighbors, it moves in the direction of the other neighbor with maximum velocity. A robot keeps moving unless its motion will cause it to lose communication with its neighbors. If robot $i$ can communicate with both of its neighbors, it applies the proposed control law (Equation~\ref{eqn:controllaw}).}

\begin{theorem}[Sufficient Condition]
If the communication range $r_c \geq \frac{L}{N}$ and the sensing range $r_s \geq {D_{\text{out}}}$, then the modified strategy converges to the optimal configuration. Here, $L$ and $D_{\text{out}}$ indicate the environment's perimeter and the length of the longest segment contained completely inside the environment, respectively. 
\label{thm:sufficient_cond}
\end{theorem}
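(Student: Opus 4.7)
The plan is to decompose the argument into two phases: a \emph{spreading} phase during which the robots evolve until every angularly adjacent pair can communicate, followed by a \emph{convergence} phase in which the modified strategy reduces to Algorithm~1 and the convergence analysis already carried out there takes over. The sensing side is immediate: $r_s \geq D_{\text{out}}$ means any boundary robot can always measure its distance to the target wherever the target is in $\mathcal{Q}$, so the coordinate $\theta_i = \varphi_o(p_i)$ of~(\ref{eqn:transformation}) is available to each robot whenever the communication preconditions of the control law~(\ref{eqn:controllaw}) are met.

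For the spreading phase I would introduce \emph{clusters}: maximal runs of consecutive robots in the cyclic label order $1,\ldots,N$ such that every adjacent pair lies within Euclidean distance $r_c$. If $K$ is the current number of clusters, each edge robot of a cluster communicates with exactly one of its two neighbors, so by the modified rule it moves at maximum angular speed into the adjoining gap and halts only when the next step would sever its remaining intra-cluster link. Each interior robot of a cluster communicates with both neighbors and runs the original control law, which drives it toward the midpoint of its guaranteed Voronoi segment and, by the safeguard built into the modified strategy, never pushes a partner beyond $r_c$. I would then establish two monotonicity properties under this dynamics: clusters never split, and inter-cluster gaps are non-increasing; consequently the only way the spreading phase can fail to reach $K = 1$ is to stop at some equilibrium.

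A length count at such an equilibrium supplies the required contradiction. At equilibrium the two outermost intra-cluster edges of every cluster are saturated at $r_c$ (else the corresponding edge robot would still be moving), so summing the Euclidean lengths of all intra-cluster edges together with the $K$ inter-cluster gaps along $\partial \mathcal{Q}$ yields
\begin{equation*}
L \;>\; (N-K)\,r_c \,+\, K\,r_c \;=\; N\,r_c,
\end{equation*}
since each inter-cluster gap is strictly greater than $r_c$ (otherwise the flanking edge robots would communicate and the two clusters would have been merged). This contradicts the hypothesis $r_c \geq L/N$ as soon as $K \geq 2$, forcing the spreading phase to terminate with a single cluster, i.e., with every angularly adjacent pair in communication. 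From that point on the modified strategy coincides with Algorithm~1 and the adaptation of Proposition~6.1 of~\cite{nowzari2012self} drives the system to the uniform configuration, completing the proof.

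The step I expect to be the main obstacle is not the length count itself but the justification of its input. The bound $(N-K)\,r_c$ treats \emph{every} intra-cluster edge as having reached the cap, yet only the two extremal edges of each cluster are directly stretched by the ``max-speed'' rule, while the interior edges are governed by the original control law applied to internal robots. Two routes seem viable: (i) argue that at equilibrium the internal-robot dynamics, being equivalent to Algorithm~1 on a fixed pair of cluster endpoints, spread the interior robots uniformly so that all intra-cluster edges equalize at $r_c$; or (ii) weaken the count to use only the two saturated outer edges per cluster, giving $L > 2K\,r_c + \text{(interior edges)}$, and close the gap via the pigeonhole observation that the $N$ arc distances between consecutive robots sum to $L$ and hence cannot all exceed $L/N \leq r_c$. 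Either route forces $K = 1$ at equilibrium, and the rest of the proof goes through as outlined.
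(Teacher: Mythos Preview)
Your two-phase decomposition, the cluster/chain abstraction, the perimeter-counting contradiction at a putative multi-cluster equilibrium, and the final handoff to the convergence analysis of Algorithm~1 are exactly the paper's argument. Two remarks. First, the paper measures every distance along $\partial\mathcal{Q}$ rather than mixing Euclidean chord lengths with arc lengths; this makes the sum of intra-chain lengths and inter-chain separations equal to $L$ on the nose, and convexity enters only through the one-line observation that boundary distance is at least Euclidean distance (so a pair at Euclidean distance $r_c$ contributes at least $r_c\geq L/N$ of perimeter, and a non-communicating pair contributes strictly more). Second, for the step you correctly flag as the crux, the paper takes your route~(i): it argues that the extreme robots keep stretching the chain, and the interior midpoint rule propagates this outward motion, until every consecutive intra-chain pair sits at Euclidean distance $r_c$, giving a $K$-robot chain boundary length at least $(K-1)L/N$. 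Your route~(ii) as stated does not close the gap: the pigeonhole observation that the $N$ arc gaps sum to $L$ only guarantees that \emph{some} adjacent pair is within $L/N\leq r_c$, i.e.\ that not every cluster is a singleton; it does not force $K=1$.
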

\begin{proof}
We define a \emph{communication chain} (Figure~\ref{fig:necessary_sufficient}-(c)) to be the maximal set of consecutive robots, $i,i+1,\ldots,j$, such that $i$ can communicate with $i+1$, $i+1$ can communicate with $i+2$, and so on until $j$. We now show that irrespective of the starting configuration, using the modified control law, all the $N$ robots will form a single chain. 

We define \emph{length} of a chain to be the distance along the boundary (in the direction that contains the chain) between the two extreme robots in a chain (Figure~\ref{fig:necessary_sufficient}-(c)). We denote two extreme robots as the robots at the two endpoints of the chain. 

Consider a chain of $K$ robots. We show that the robots in this chain will keep moving unless the length is greater than or equal to $\frac{L}{N}(K-1)$ or the chain merges with another. Extreme robots in a chain have only one neighbor that they can communicate with. According to the strategy, these robots will continuously move (in a direction away from the chain) with maximum velocity. Other robots between the two extreme robots in the communication chain apply self-triggered control law (Equation~\ref{eqn:controllaw}) to go towards the midpoint of its two neighbors. Therefore, the length of the chain keeps increasing as long as the robots are moving. Unless the chain merges with another one, the robots will stop moving when the distance between all consecutive pairs of robots is $r_c$. Here, $r_c \geq \frac{L}{N}$. If two consecutive robots are on the same environment edge, then the distance along the boundary between the robots is exactly equal to $r_c$. If the two robots are on different boundary edges, then the distance between the robots along the boundary will be greater than $r_c$ (due to the convexity of the environment). Therefore, the length of the chain when all $K$ robots stop moving will be greater than or equal to $\frac{L}{N}(K-1)$.

Next, we prove our claim that eventually all robots form a single chain, by contradiction. Denote the \emph{separation} between two consecutive chains as the distance between the starting (ending) robot of one chain and the ending (starting) robot of another chain along the boundary of the environment (Figure~\ref{fig:necessary_sufficient}-(c)). Suppose, for contradiction, that there exists $M>1$ chains after all robots have stopped moving. Let $K_1,\ldots,K_M$ be the number of robots in the $M$ chains. $K_1+\cdots+K_M=N$. 

The separation between any two consecutive chains is strictly greater than $\frac{L}{N}$. Furthermore, the length of any chain is greater than or equal to $\frac{L}{N}(K_i-1)$. The perimeter of the environment must be equal to the length of all chains and the separation between all consecutive chains. Therefore, the perimeter must be strictly greater than
$$\frac{L}{N}(K_1-1) + \cdots + \frac{L}{N}(K_M-1) +  M \frac{L}{N} = L.$$
This contradicts with the fact that the perimeter of the environment is exactly $L$. Thus, we prove all the robots eventually form a single chain. 

Finally, once we ensure that robots form a single chain, then the convergence proof follows from the convergence of the self-triggered policy.

\label{pro:sufficient_cond}
\end{proof}
}

\section{Simulation and Outdoor Experiment} \label{sec:sims}
In this section, we evaluate the performance of the proposed self-triggered tracking coordination algorithm. We first compare the convergence time for the self-triggered and constant communication strategies to achieve a uniform configuration on a convex boundary (Section~\ref{sec:self}). Then, we demonstrate the performance of the self-triggered and constant strategies for moving targets.

\subsection{Stationary Target Case}

In this section, we compare the performance of the self-triggered and constant strategies in terms of their convergence speeds and the number of communication messages to achieve a uniform configuration on the boundary of a convex environment. Here, we focus on the base case of known, stationary target position. All results are for $30$ trials where the initial positions of the robots are drawn uniformly at random on the boundary. Our MATLAB implementation is also available online.\footnote{\url{https://github.com/raaslab/Self-triggered-mechanism}}

\begin{figure*}[htb]
\centering
\subfigure[$k=1$]{\includegraphics[width=0.67\columnwidth]{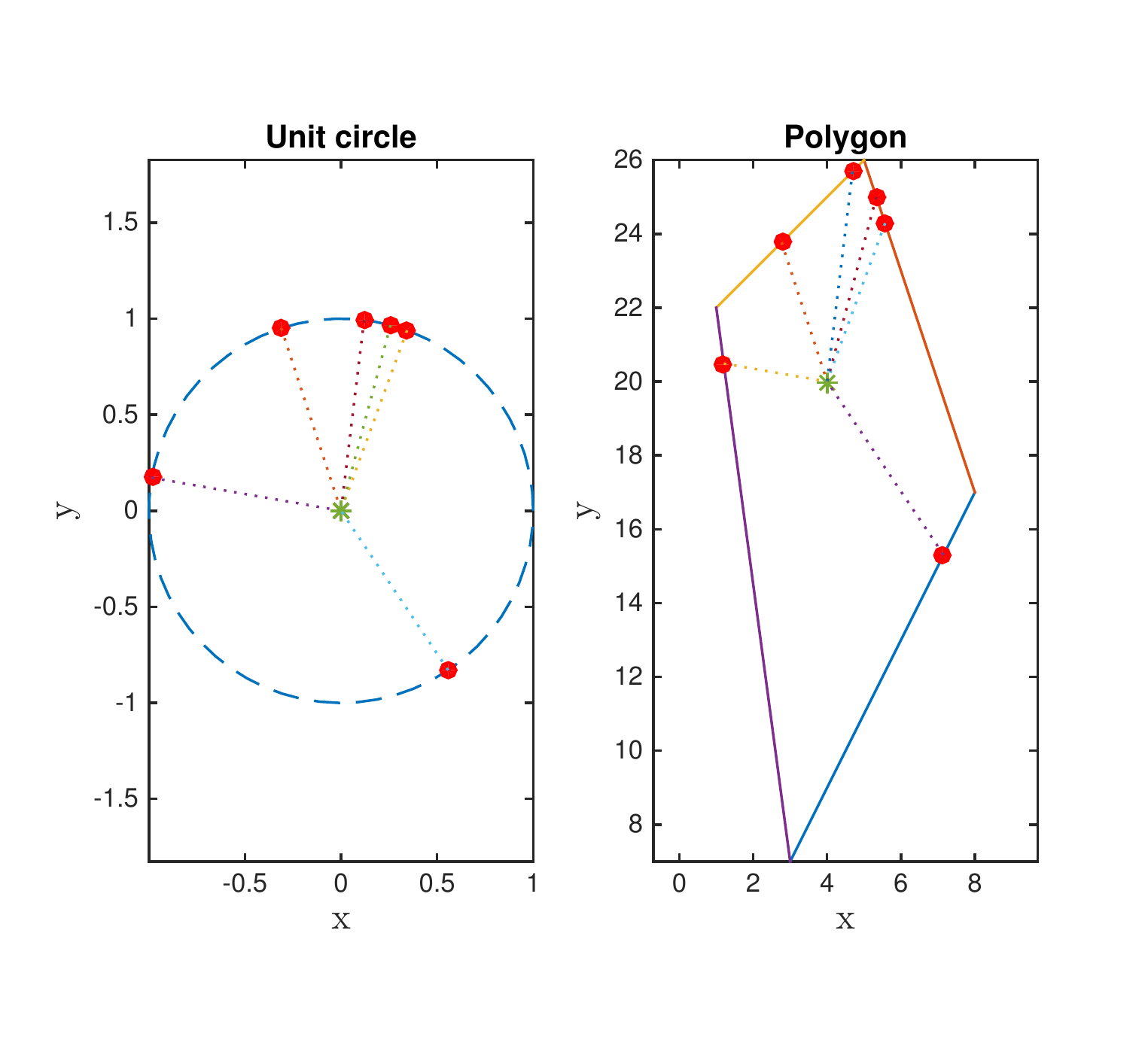}}
\subfigure[$k=400$]{\includegraphics[width=0.67\columnwidth]{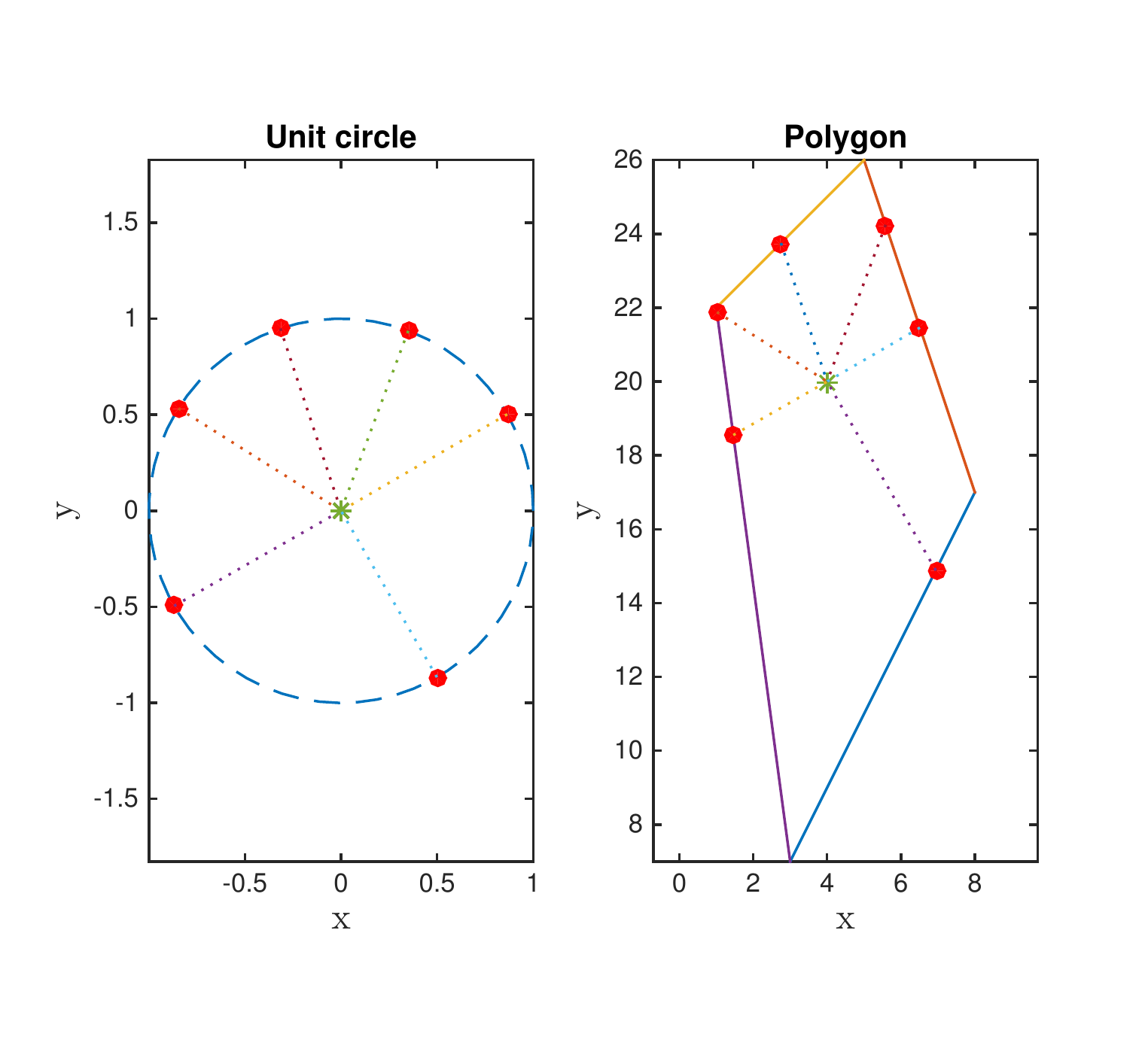}}
\subfigure[$k=760$]{\includegraphics[width=0.67\columnwidth]{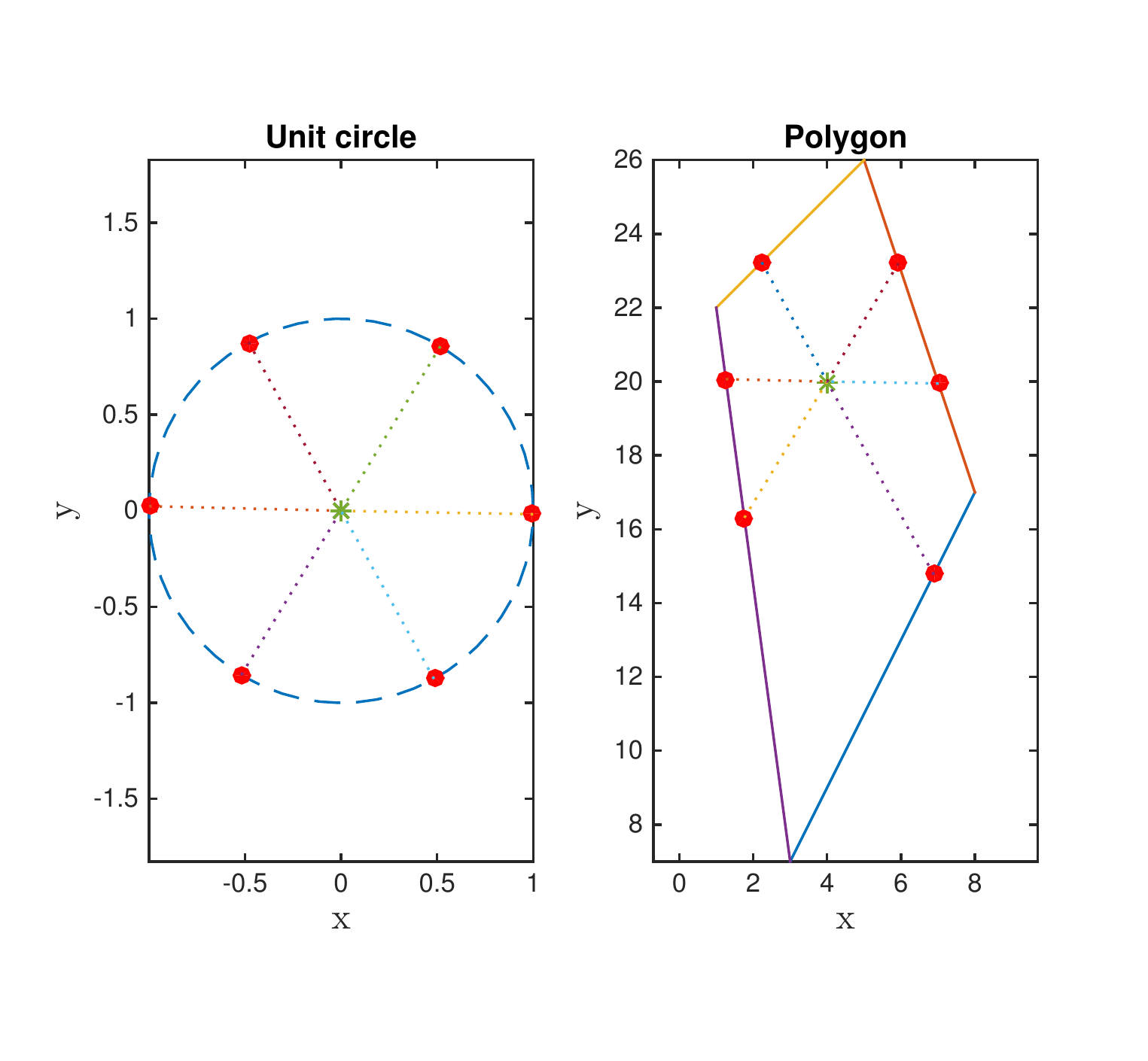}}
\caption{Self-triggered tracking with six robots moving on the boundary of a convex polygon with a known, stationary target. The robots took $760$ time steps to converge to the uniform configuration around the target. \label{fig:stationary}}
\end{figure*}

Figure~\ref{fig:stationary} shows snapshots of the active tracking process under the proposed self-triggered strategy starting with the initial configuration at time step $k=1$ in Figure~\ref{fig:stationary}-(a) and ending in a uniform configuration around the target at $k=760$ as shown in Figure~\ref{fig:stationary}-(c). For this example, we assume that the robots know the position of the stationary target. At each time step, we use the map $\varphi_{o}$ to find $\theta_i$ on the unit circle (Equation~\ref{eqn:transformation}), compute the control law as per Algorithm 1, and apply the inverse map $\varphi^{-1}$ to compute the new positions of the robots on $\partial Q$. We set $\Delta t=0.1\,s$ and assume that each robot has the same maximum angular velocity $\omega_{\max}=\frac{\pi}{180}\,\frac{rad}{s}$. In general, one can use the procedure given in the appendix to compute $\omega_{\max}$ for a given environment. \rev{Note that, the convergence time depends on $\omega_{\max}$, which in turn, depends on the shape of the environment assuming a fixed maximum linear velocity. In Figure~\ref{fig:ctime_omega}, we plot the convergence time for six robots starting from a fixed configuration by varying $\omega_{\max}$ from $\frac{\pi}{180}\frac{rad}{s}$ to $\frac{\pi}{2} \frac{rad}{s}$. It shows that the convergence time approaches a limit with increasing $\omega_{\max}$.} 

\begin{figure}
\centering
\includegraphics[width=0.65\columnwidth]{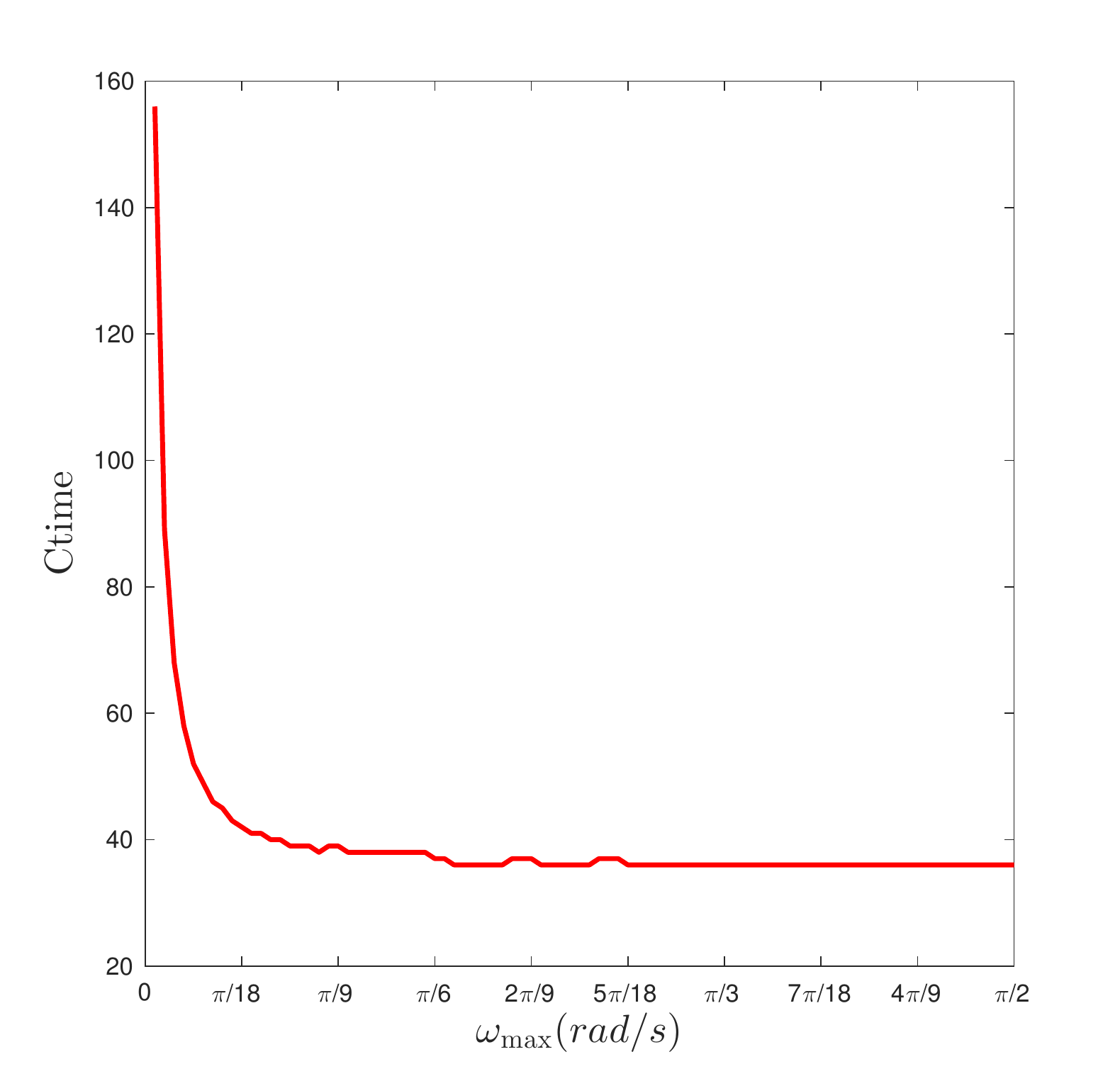}
\caption{The convergence time for six robots starting with same initial configuration for increasing values of maximum angular velocity $\omega_{\max}$.\label{fig:ctime_omega}}
\end{figure}

\begin{figure}[htb]
\centering{
\subfigure[]{\includegraphics[width=0.67\columnwidth]{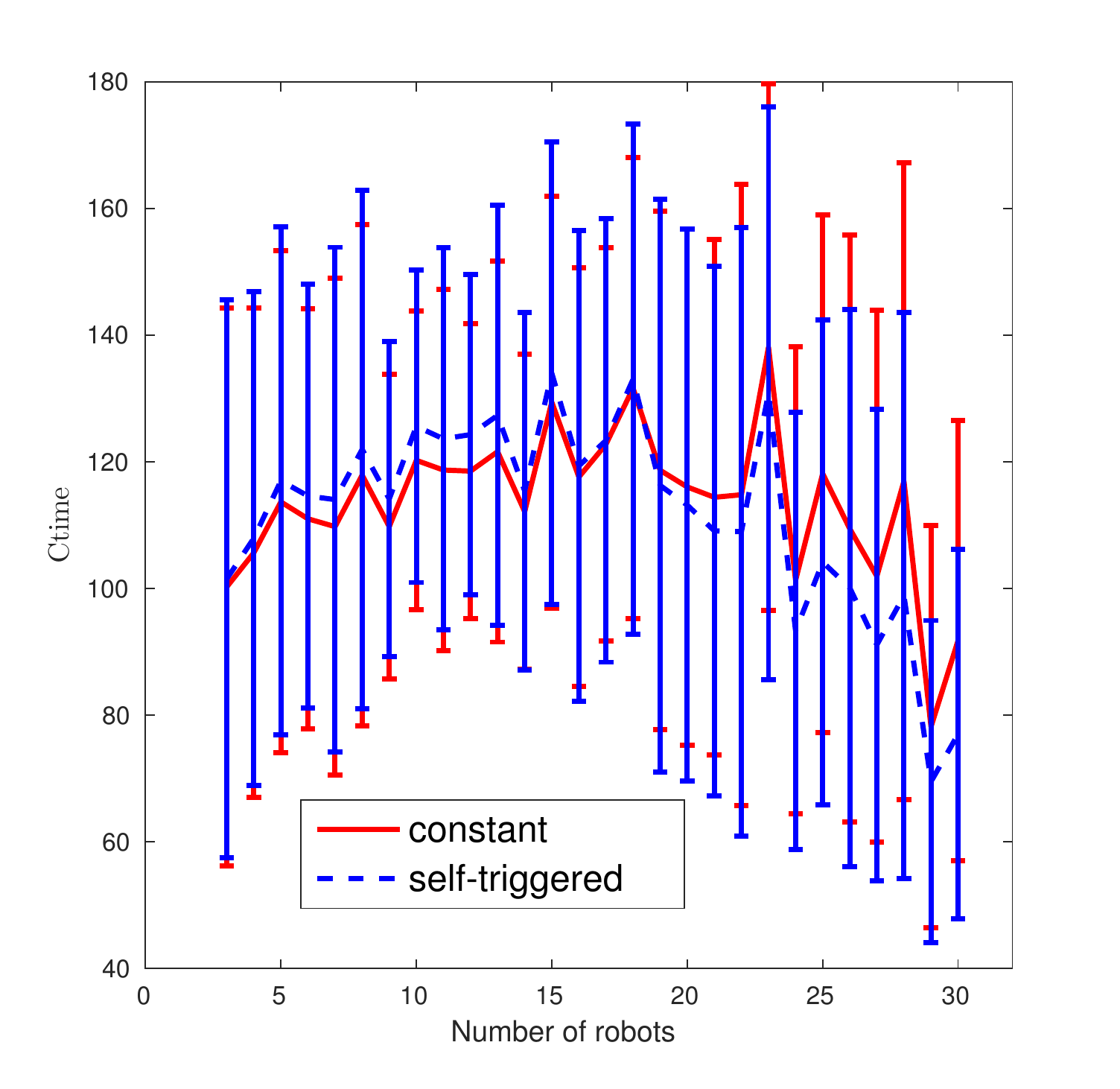}}
\subfigure[]{\includegraphics[width=0.67\columnwidth]{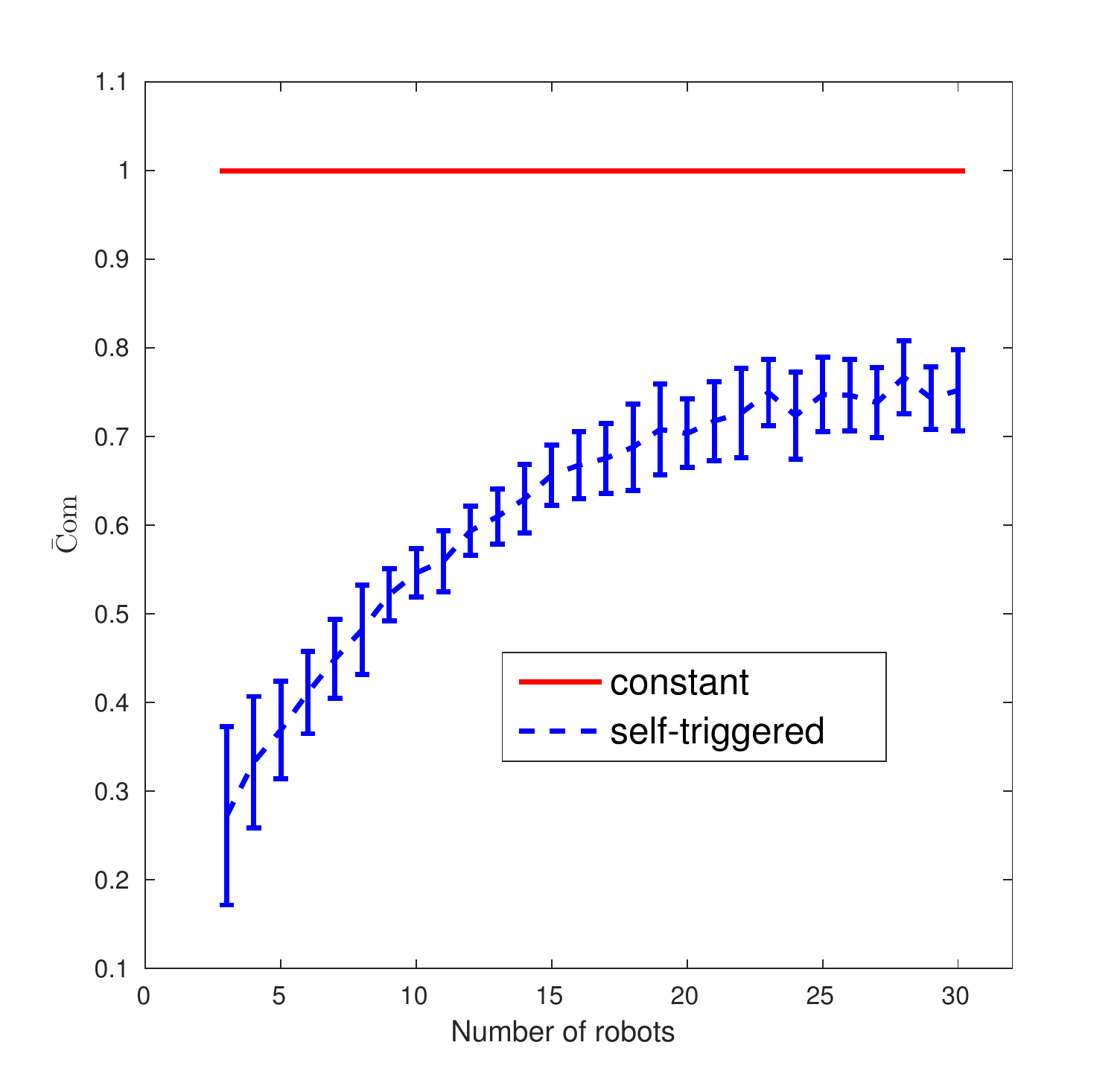}}
}
\caption{Comparison of the convergence time {(a)} and the number of communication messages {(b)} 
in constant and self-triggered strategies with a stationary target at known position. \rev{The error bar indicates standard deviation.} \label{fig:comp}}
\end{figure}


We first compare the convergence time of the two strategies \rev{with the same starting configurations for 30 trials} (Figure~\ref{fig:comp}-(a)). The convergence time, $\mathrm{Ctime}$ is specified as the timestep $k$ when the convergence error, $\mathrm{Cerr}$, drops below a threshold. We use $0.1N$ as the threshold, where $N$ is the number of robots. The convergence error term, $\mathrm{Cerr}$, is defined as:
\begin{equation}
\mathrm{Cerr}=\sum_{i=1}^{N} \left|\theta_{i}-V_{\mid}^{i}\right|
\label{eqn:conerrconstant}
\end{equation}
in the constant communication case, and
\begin{equation}
\mathrm{Cerr}=\sum_{i=1}^{N} \left|\theta_{i}-gV_{\mid}^{i}\right|
\label{eqn:conerrself}
\end{equation}
in the self-triggered case. 


The average number of communication messages is found as:
\begin{equation*}
\mathrm{\bar{Com}}=\frac{\sum_{i=1}^{N}\mathrm{com}(i,\mathrm{Ctime})}{N\times \mathrm{Ctime}}
\end{equation*}
where $\mathrm{com}(i,\mathrm{Ctime})$ gives the total number of communications of a robot with its neighbors $i$ at the end of $\mathrm{Ctime}$.  Figure~\ref{fig:comp}-(b) shows the $\mathrm{\bar{Com}}$ in the self-triggered case. The number of communication messages in the constant communication case is a constant. 
Figure~\ref{fig:comp}-(a) shows that the self-triggered mechanism converges comparatively with the constant strategy.

We also implemented our algorithm in ROS and performed simulations in the Gazebo environment~\cite{koenig2004design}. Figure~\ref{fig:gazebo} shows an instance with six differential-drive Pioneer 3DX robots~\cite{mei2005case} that can move in forwards and backwards direction. 

\begin{figure}[htb]
\centering
\includegraphics[width=0.65\columnwidth]{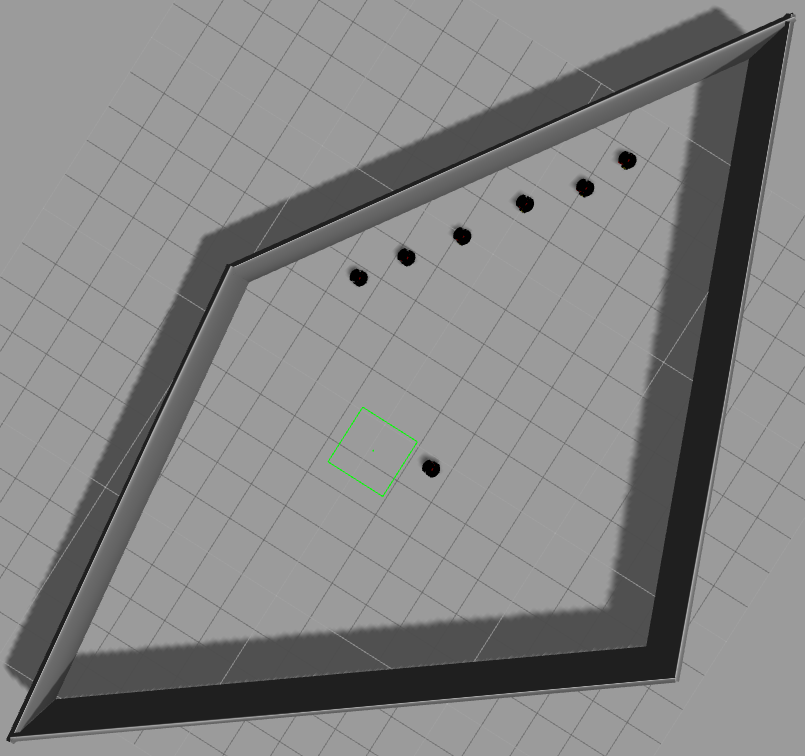}
\caption{Gazebo environment where six simulated Pioneer 3DX robots are tasked to track a target moving in the interior.\label{fig:gazebo}}
\end{figure}

\begin{figure}[htb]
\centering{
\subfigure[]{\includegraphics[width=0.67\columnwidth]{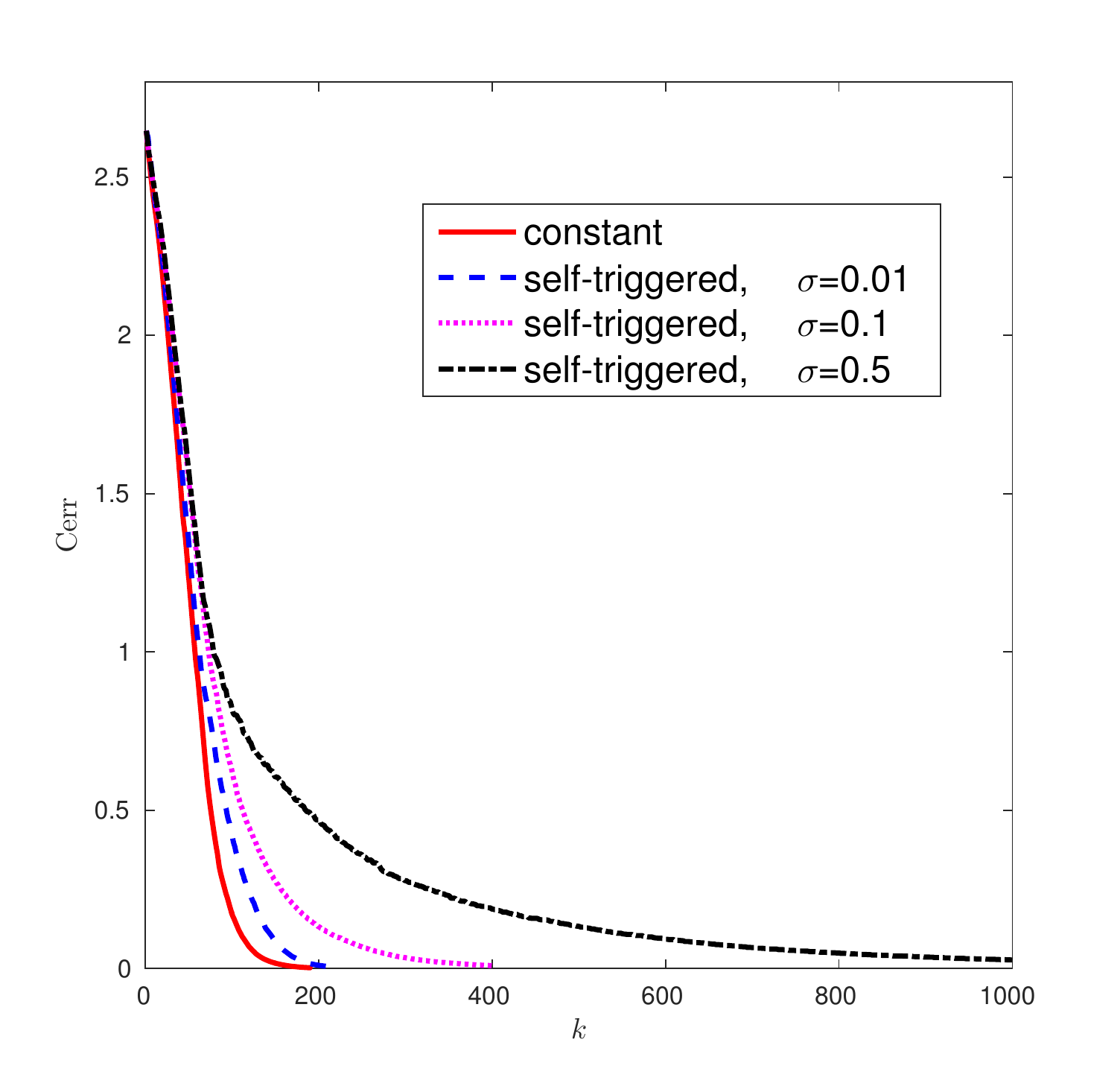}}
\subfigure[]{\includegraphics[width=0.67\columnwidth]{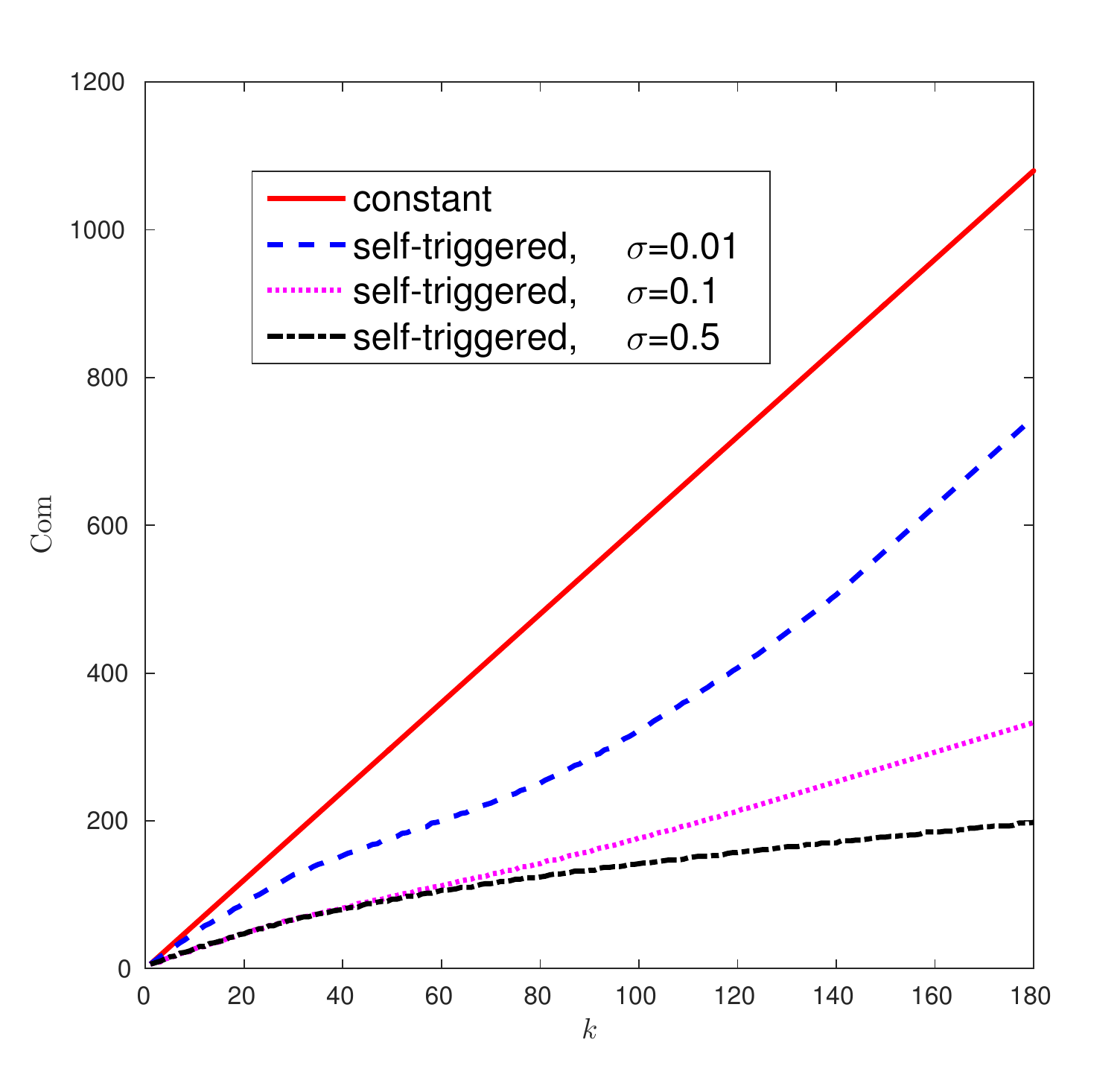}}
\caption{Comparison of convergence error and communication messages in constant and self-triggered communication strategies using the setup shown in Figure~\ref{fig:gazebo}.\label{fig:comp_fourpioneer}}}
\end{figure}

Figure \ref{fig:comp_fourpioneer}-(a) shows that the constant communication strategy converges faster than the self-triggered one with six simulated robots. Changing the tolerance parameter $\sigma$ affects the convergence time of the self-triggered strategy. The smaller the convergence tolerance $\sigma$, the faster the convergence which comes at the expense of an increased number of messages. Figure \ref{fig:comp_fourpioneer}-(b) shows communication messages for both strategies. The smaller the tolerance $\sigma$, the larger the number of messages. The convergence tolerance $\sigma$ acts as a trade-off between the communication messages and the convergence speed in the self-triggered case.

\subsection{Moving Target Case} \label{subsec:moving}

Next, we present simulation results for the realistic case of mobile, uncertain target (Section~\ref{sec:practical}). We evaluate three strategies: constant communication with centralized EKF, self-triggered communication with centralized EKF, self-triggered communication with decentralized EKF. All three algorithms were implemented in Gazebo with six simulated Pioneer robots and a simulated Pioneer target moving on a circular trajectory. 
\rev{We assume that all the robots have the same maximum linear velocity, $v_{\max}=0.2 m/s.$ We calculate the linear velocity for each robot $i$ by $v_i= \omega_i\|p_i - \hat{o}\|_2$.}
\begin{figure}[htb]
\centering
\includegraphics[width=0.67\columnwidth]{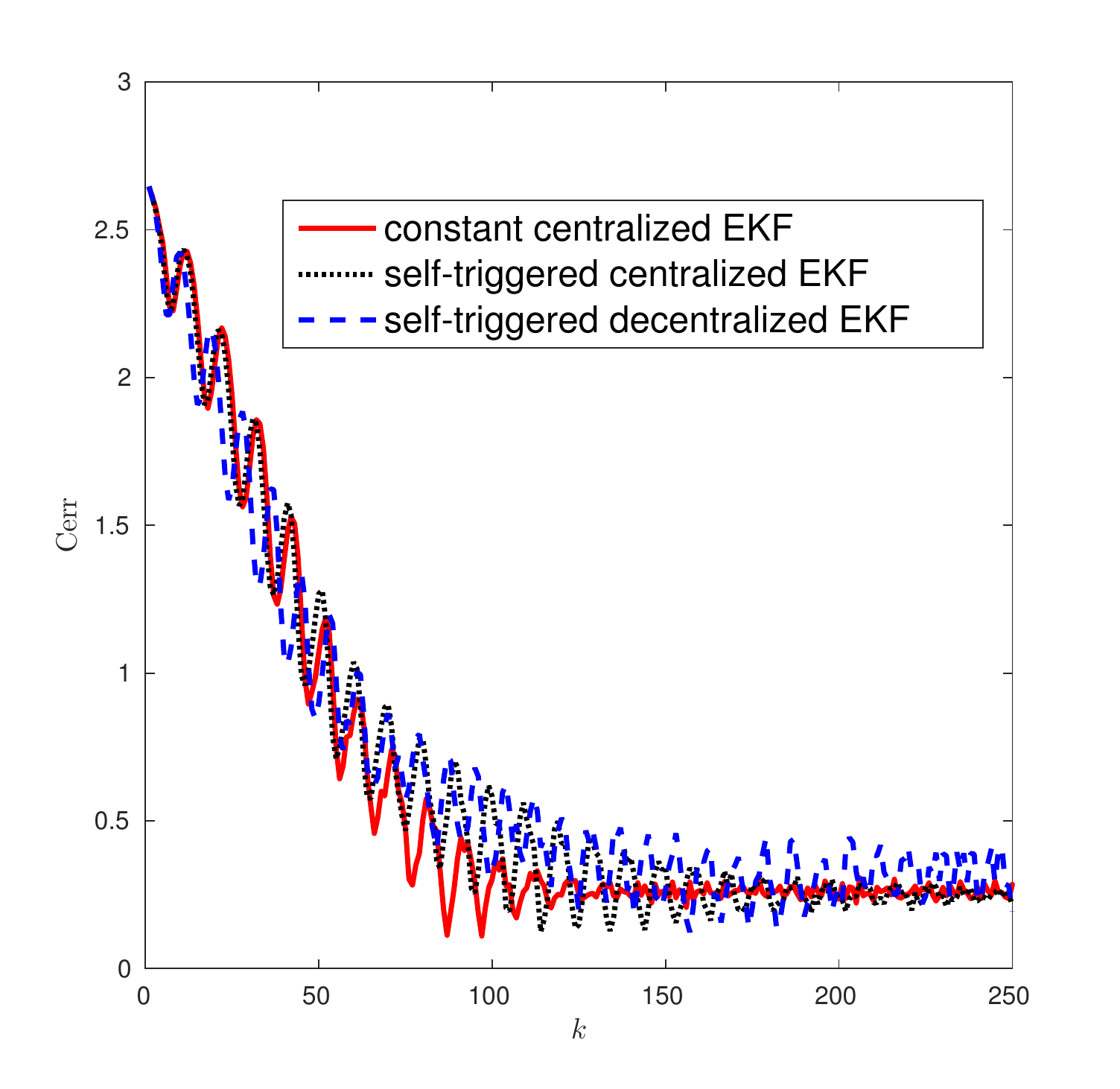}
\caption{Convergence error for mobile target tracking in constant communication with centralized EKF, self-triggered communication with centralized EKF, self-triggered communication with decentralized EKF.
\label{fig:moving}}
\end{figure}

\begin{figure}[htb]
\centering
\includegraphics[width=0.67\columnwidth]{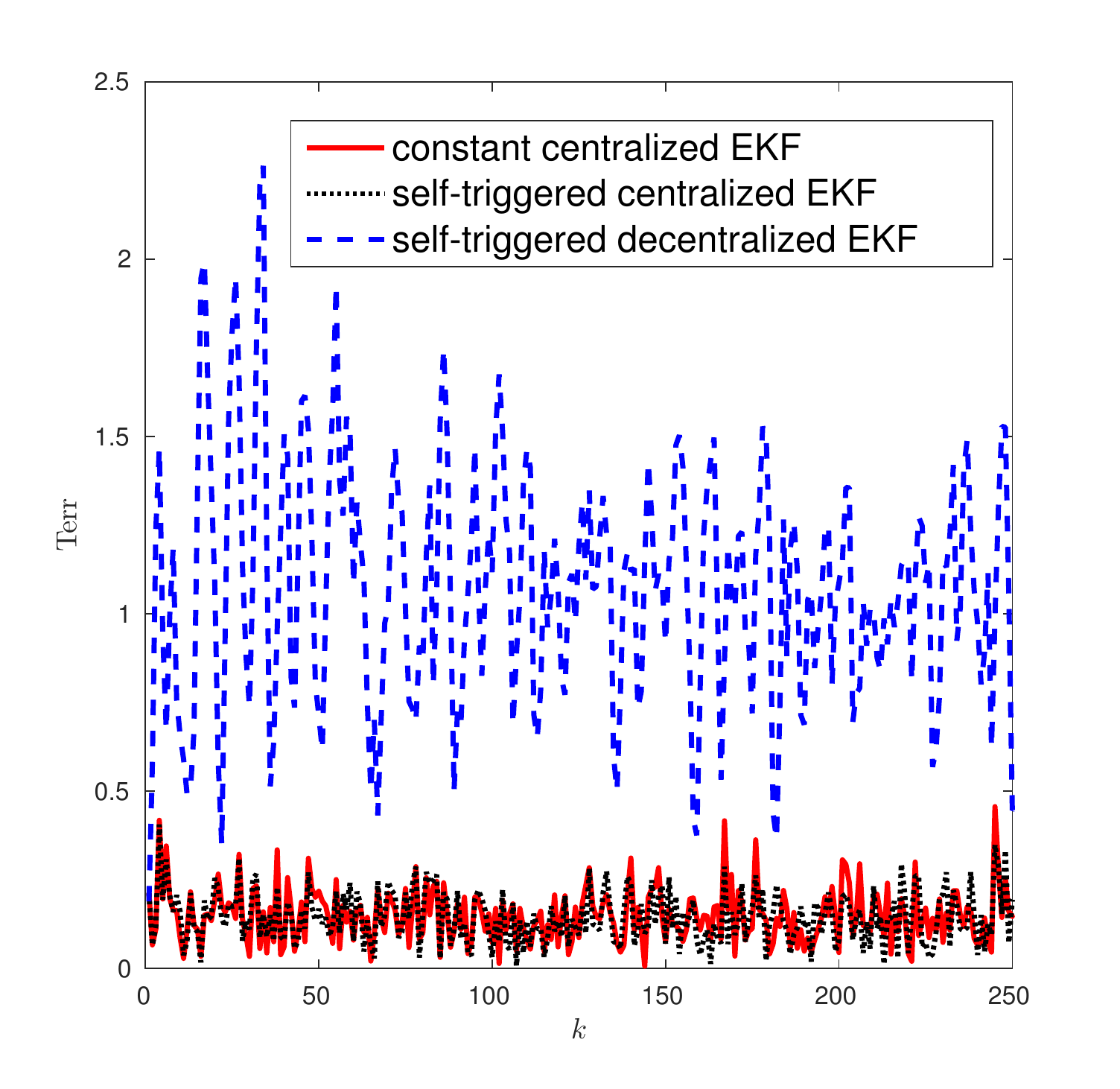}
\caption{Error in target's estimate for mobile target tracking in constant communication with centralized EKF, self-triggered communication with centralized EKF, self-triggered communication with decentralized EKF. \label{fig:movingtar}}
\end{figure}

For a moving target with $v_o=1.0 m/s$ and $\omega_o=0.6 rad/s$, Figure \ref{fig:moving} shows all three algorithms have similar tracking performance with respect to the convergence error, $\mathrm{{C}err}$, over time. However, the target estimate error $\mathrm{Terr}$ is smaller in the centralized EKF cases than the decentralized case as shown in Figure \ref{fig:movingtar}. The target estimate error is defined as:
\begin{equation*}
\mathrm{Terr}=\|\hat{o}-o\|,
\end{equation*}
for the centralized case with $\hat{o}$ indicating the centralized estimate of the target, and 
\begin{equation*}
\mathrm{Terr}=\frac{\sum_{i=1}^{N}\|\hat{o}_i-o\|}{N},
\end{equation*}
for the decentralized case with $\hat{o}_i$ indicating the target estimate from each robot $i$. 

\begin{figure*}[htb]
\centering
\subfigure[$r=\frac{0.6 m/s}{0.6 rad/s}=1m$]{\includegraphics[width=0.5\columnwidth]{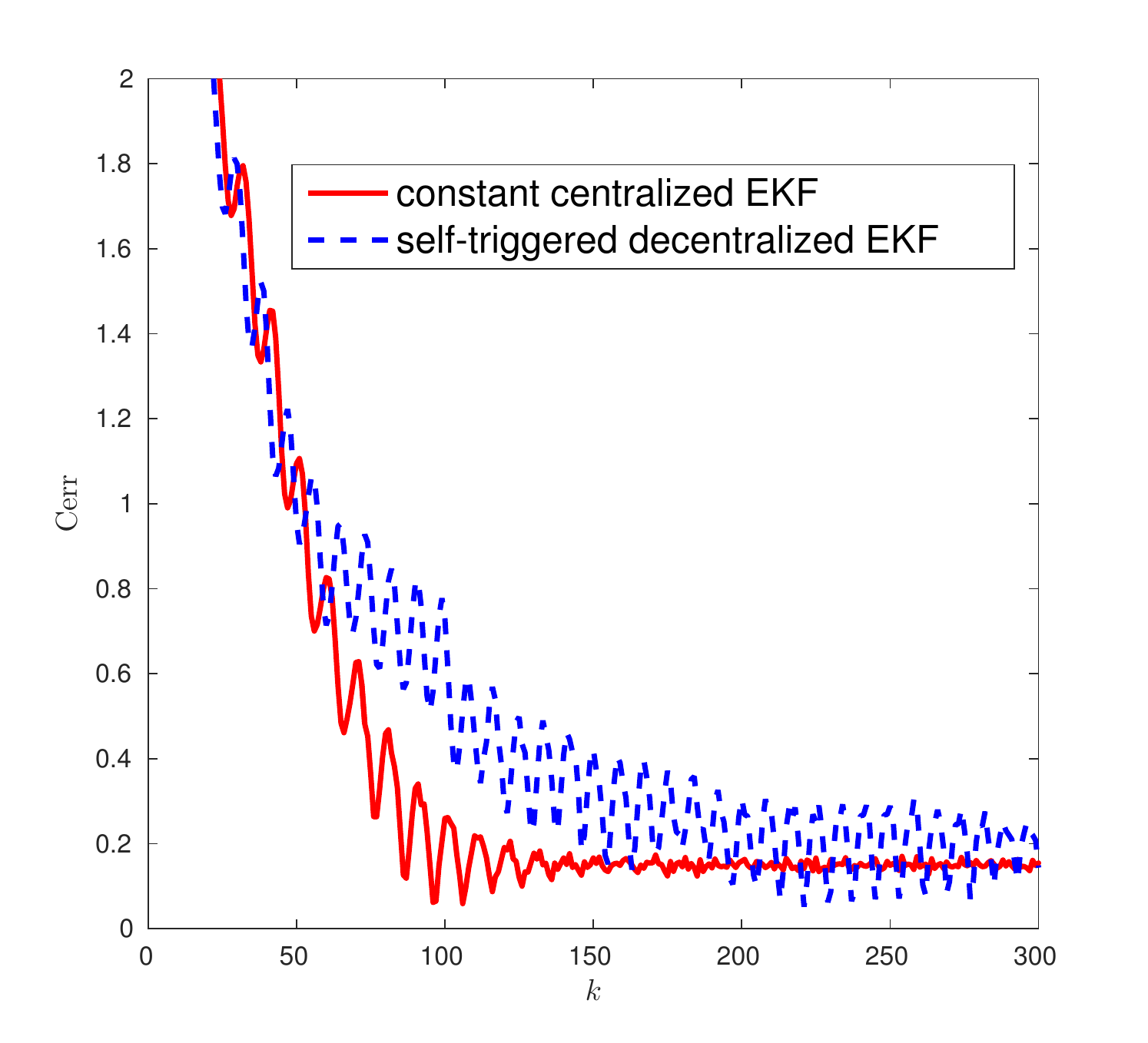}}
\subfigure[$r=\frac{1.0 m/s}{0.6 rad/s}\simeq 1.67m$]{\includegraphics[width=0.5\columnwidth]{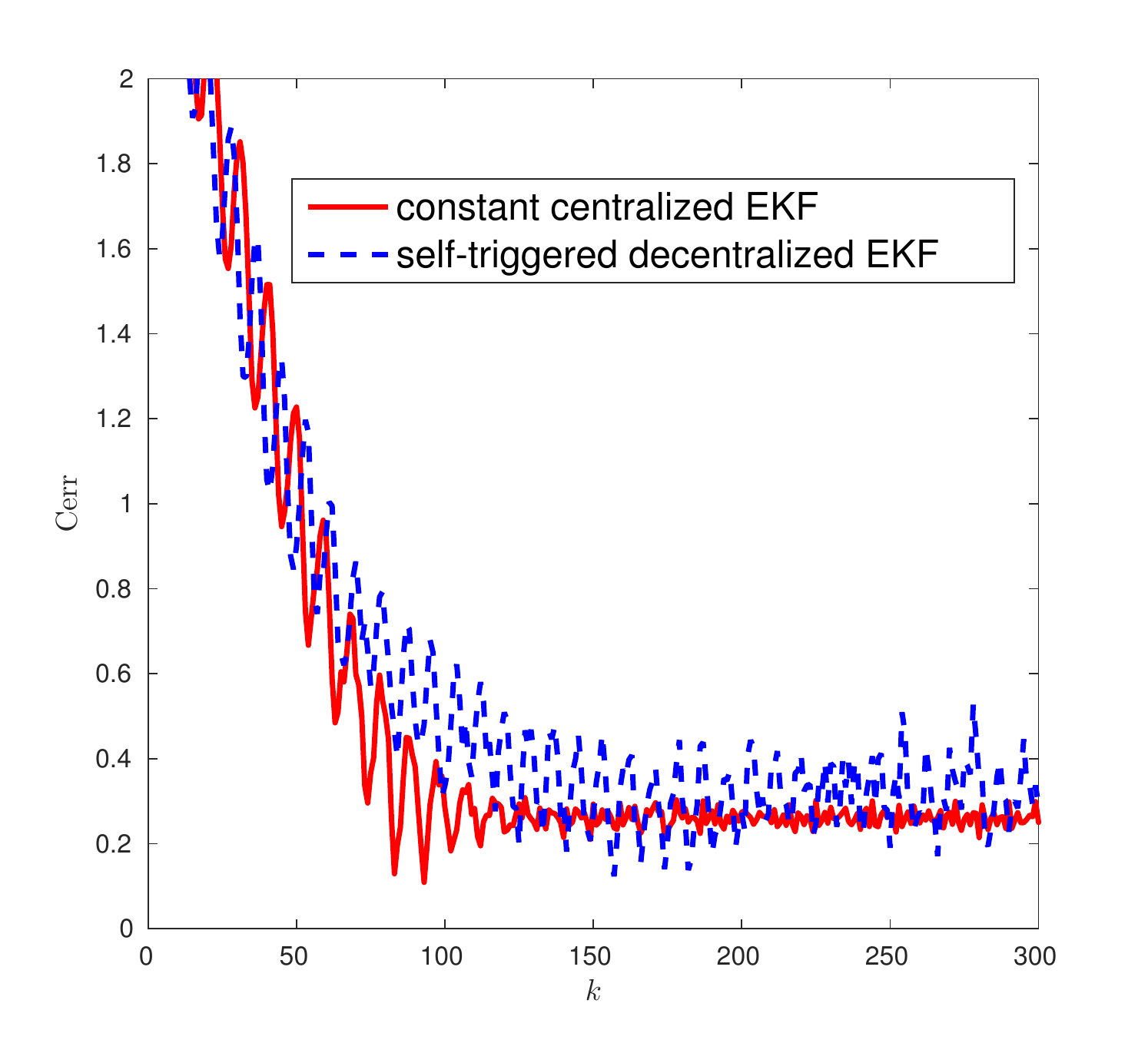}}
\subfigure[$r=\frac{1.0 m/s}{0.4 rad/s}=2.5m$]{\includegraphics[width=0.5\columnwidth]{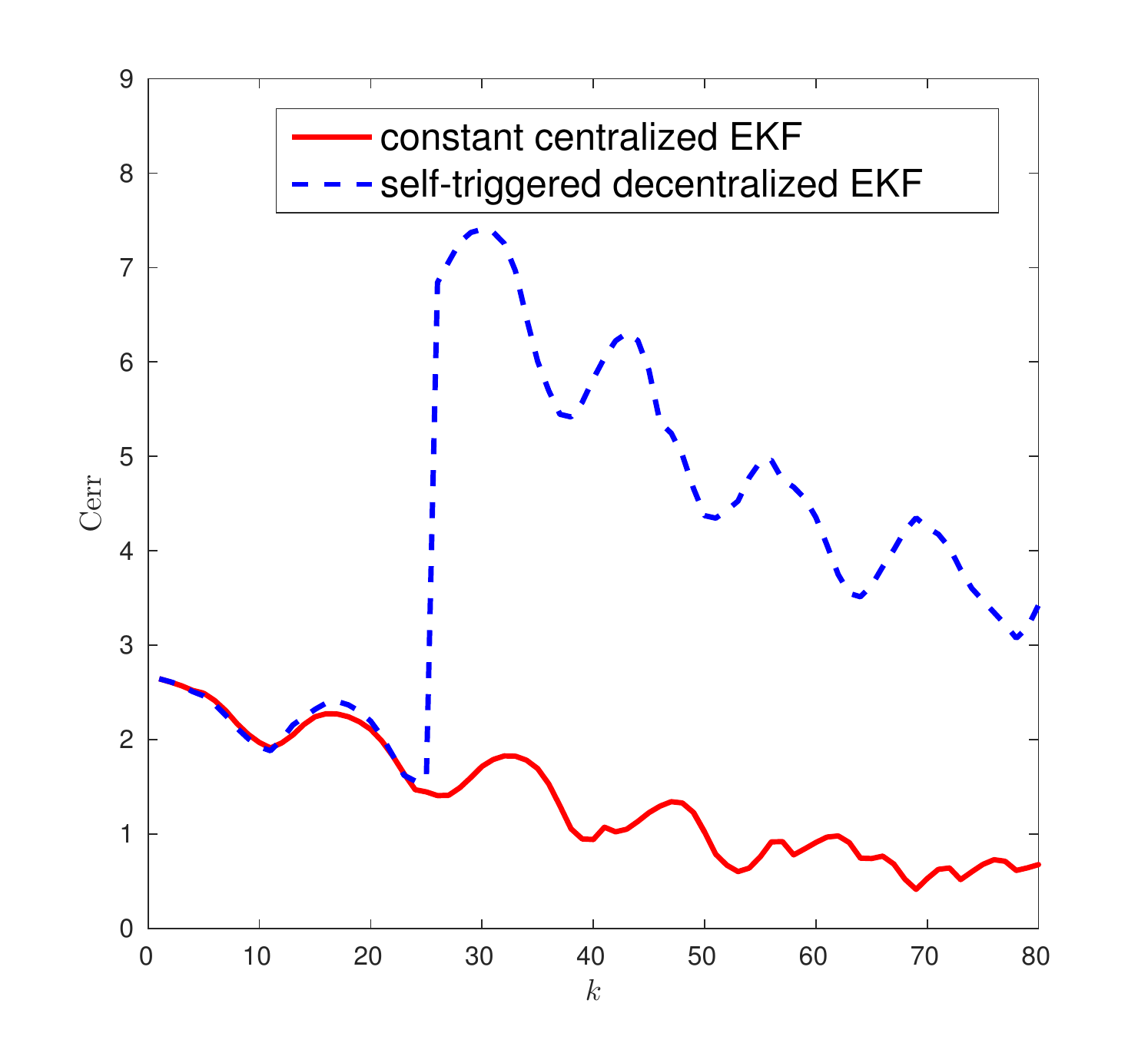}}
\subfigure[$r=\frac{1.0 m/s}{0.8 rad/s}=1.25m$]{\includegraphics[width=0.5\columnwidth]{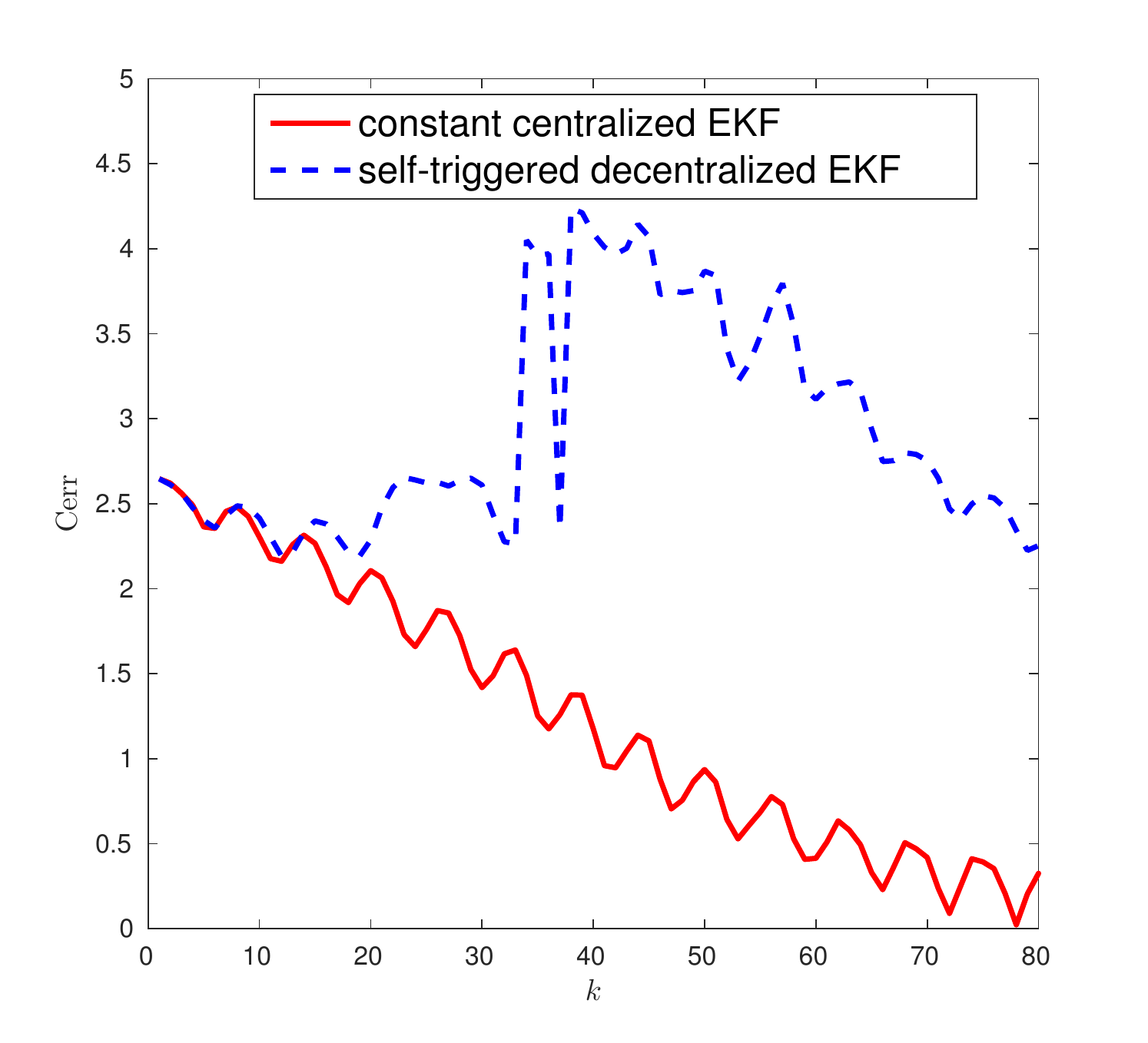}}
\caption{Comparison of $\mathrm{{C}err}$ for mobile target tracking with constant centralized EKF and self-triggered decentralized EKF w.r.t. the radius and velocities ($v$ and $\omega$) of the moving target.\label{fig:compare_good_bad}}
\end{figure*}


Figure~\ref{fig:compare_good_bad} shows the tracking performance of the self-triggered communication decentralized EKF strategy in relation to the baseline constant communication centralized EKF strategy as a function of the linear and angular velocities of the target's motion. We observe that the performance of the self-triggered strategy is comparable to the baseline algorithm except when the target moves in a large circle (Figure~\ref{fig:compare_good_bad}-(c)) and when the target moves too fast (Figure~\ref{fig:compare_good_bad}-(d)).

\begin{figure*}[htb]
\centering{
\subfigure[]{\includegraphics[width=0.64\columnwidth]{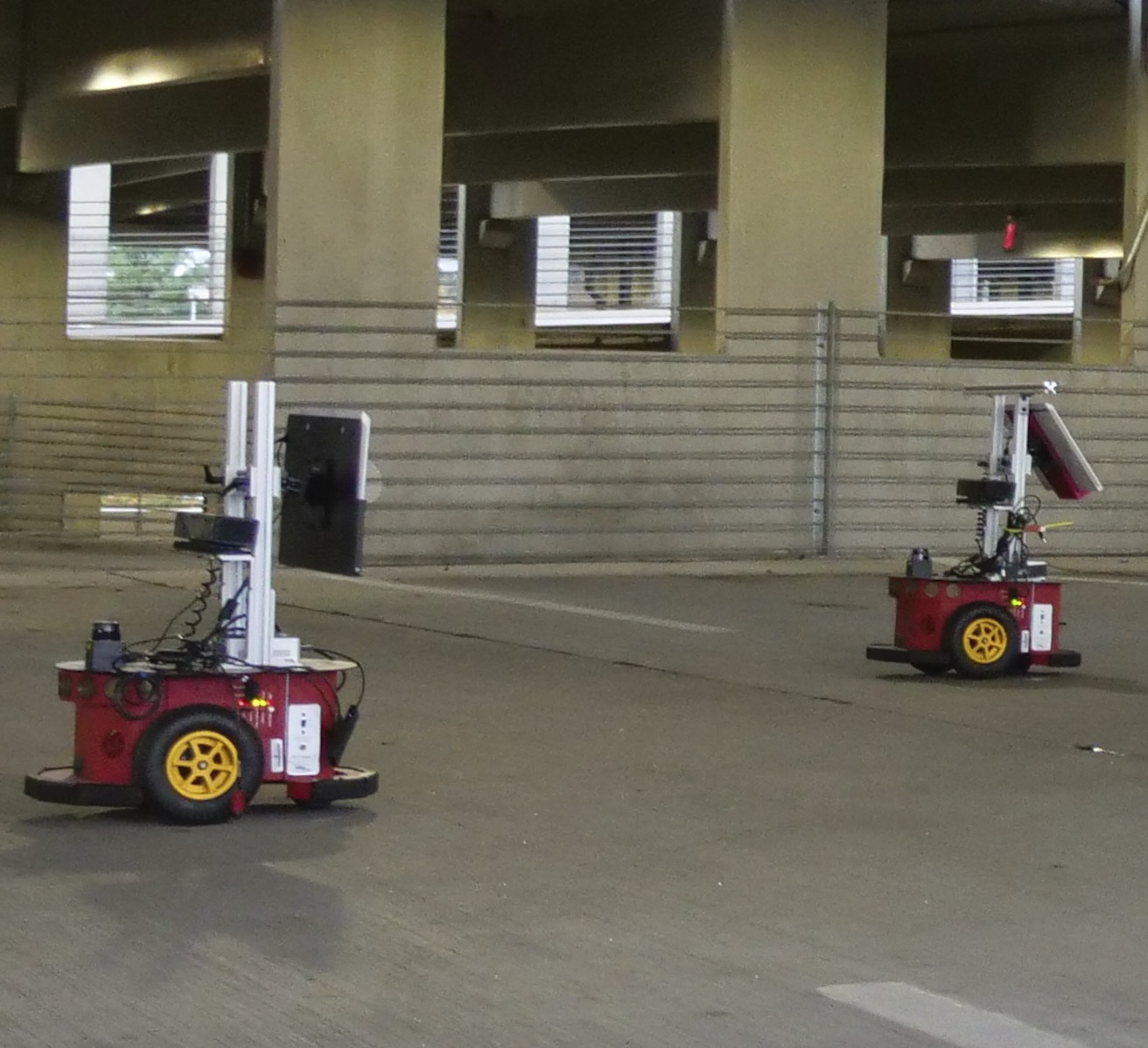}}
\subfigure[]{\includegraphics[width=0.69\columnwidth]{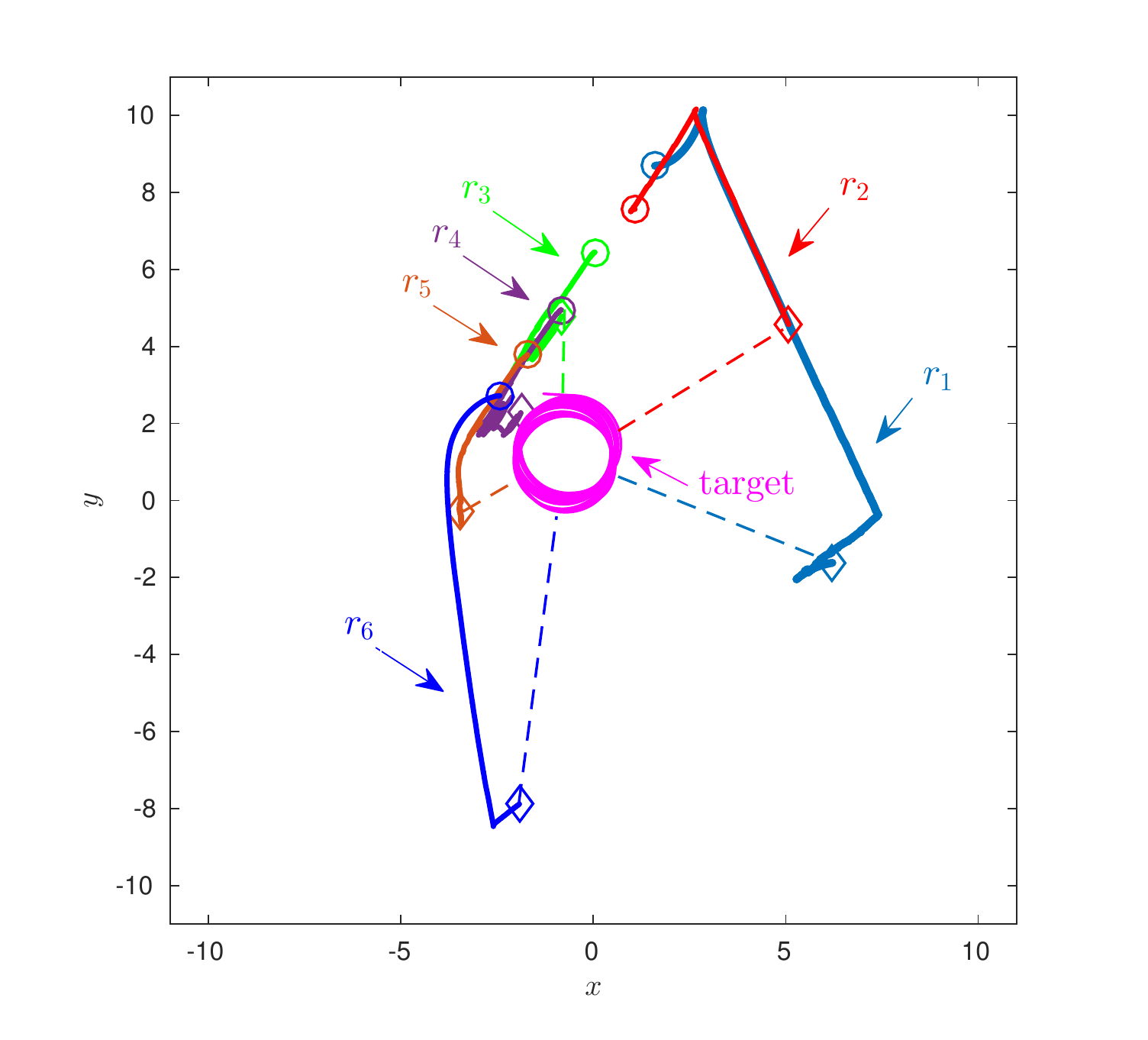}}
\subfigure[]{\includegraphics[width=0.66\columnwidth]{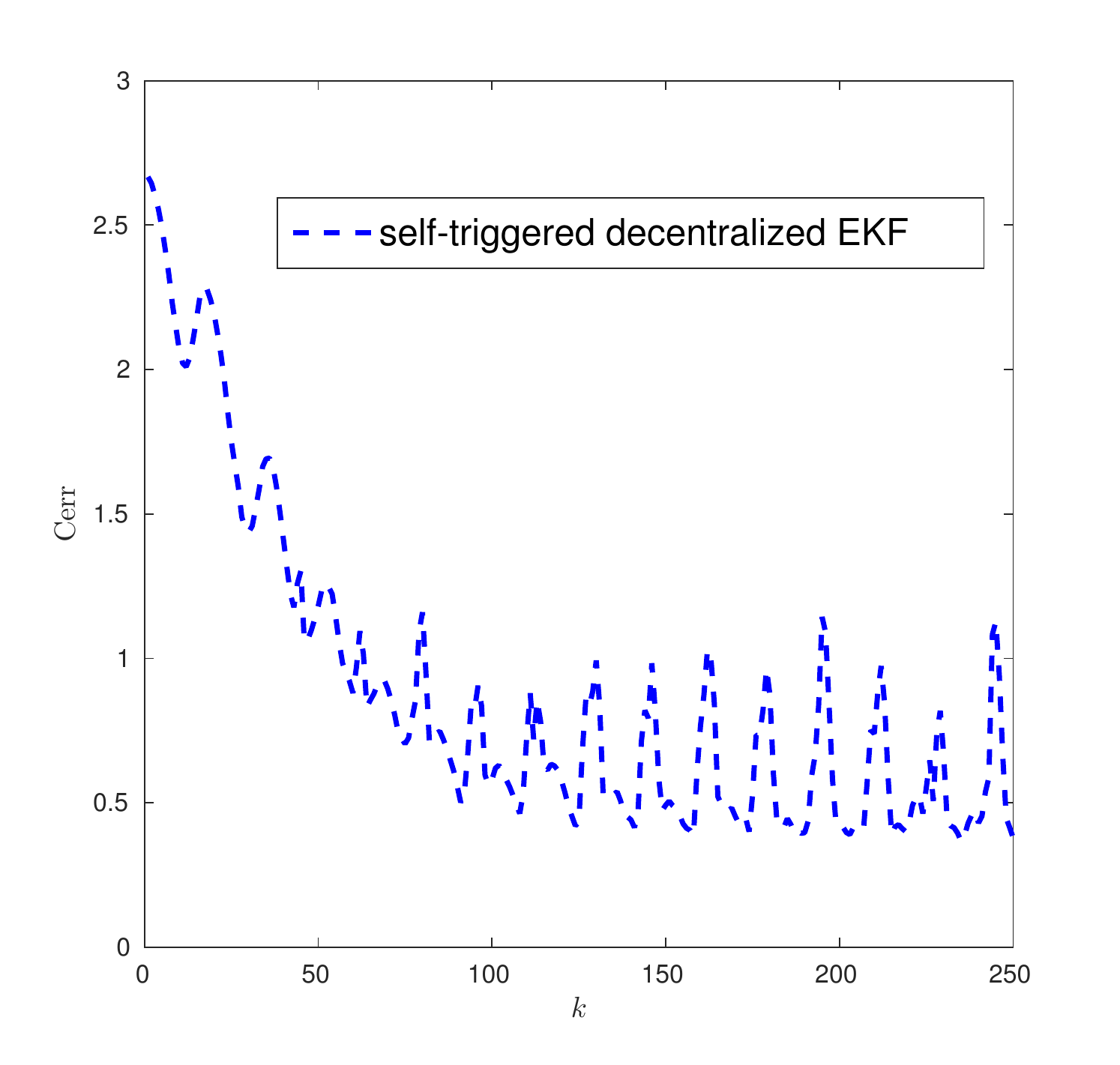}}
\caption{The trajectories of all the robots (five simulated robots, $r_1$--$r_5$, in Gazebo and two actual robots, $r_6$ and the target) and convergence error for self-triggered communication with decentralized EKF strategy. \label{fig:two_real_robot_trajectoires}}}
\end{figure*}

\subsection{Proof-of-Concept Experiment}
To further verify the tracking performance of the self-triggered decentralized EKF strategy, we also conducted a proof-of-concept mixed reality experiment. Due to limited resources, we used five simulated Pioneer 3DX robots ( $r_1\sim r_5$) cooperating with one real Pioneer 3DX robot ($r_6$) to track one real Pioneer 3DX target moving with $v_o=1.0 m/s$ and $\omega_o=0.6 rad/s$. The initial deployment for all seven robots is the same as Gazebo experiment (Figure~\ref{fig:gazebo}). The two real Pioneer robots (robot 6 and target) and the trajectories of all robots during tracking are shown in Figure~\ref{fig:two_real_robot_trajectoires}-(a) and (b). Figure~\ref{fig:two_real_robot_trajectoires}-(c) shows the self-triggered communication decentralized EKF strategy achieves a comparable tracking performance w.r.t. the convergence error. The video showing all the simulations and experiments is available online.\footnote{\url{https://youtu.be/UcsRCc9cfns}}

\section{Discussion and Conclusion}	\label{sec:conc}
In this paper, we investigated the problem of active target tracking where each robot controls not only its own positions but also decides when to communicate and exchange information with its neighbors. We focused on a simpler target tracking scenario, first studied in reference~\cite{martinez2006optimal}. We applied a self-triggered coordination strategy that asymptotically converges to a uniform configuration around the target while reducing the number of communication to less than $30\%$ of a constant strategy. We find that the self-triggered strategy performs comparably with the constant communication strategy. Future work includes extending the self-triggered strategy to decide not only when to communicate information, but also when to obtain measurements and which robots to communicate with. We conjecture that the latter question is crucial for better performance while tracking mobile targets. \rev{The self-triggered strategy can also be applied to other domains with networked controllers, e.g., for optimization of the networked industrial processes~\cite{wang2017network,wang2016combined}.}



%

\appendix
\section*{Calculation of $\omega_{\max}$} \label{app:omega}

\rev{Assume the boundary of the convex environment $\partial\mathcal{Q}$ and the position (or its estimate) of the target are known.} And Assume \rev{the} robot has a  maximum speed $v_{\max}$ with which it can move on $\partial\mathcal{Q}$.  Thus, it can move as far as $d_{\max}=v_{\max}\Delta t$ in one time step $\Delta t$. We assume that $d_{\max}$ is less than the length of any edge of the polygon. Hence, a robot can cross at most one vertex per time step. Then we split the calculation of $\omega_{\max}$ into three separate cases (Figure~\ref{fig:omegamax}). 

In all cases, let $\mathcal{E}^{i}$ be the edge on which the robot is located before moving a distance of $d_{\max}$. Let $l\mathcal{E}^{i}$ be the line supporting the edge. In cases 1 and 2, we compute $\omega_{\max}$ when the robot remains on $\mathcal{E}^{i}$ after traveling $d_{\max}$, whereas in case 3 the robot goes from $\mathcal{E}^{i}$ to $\mathcal{E}^{i+1}$

\begin{figure}
\centering
\includegraphics[width=0.65\columnwidth]{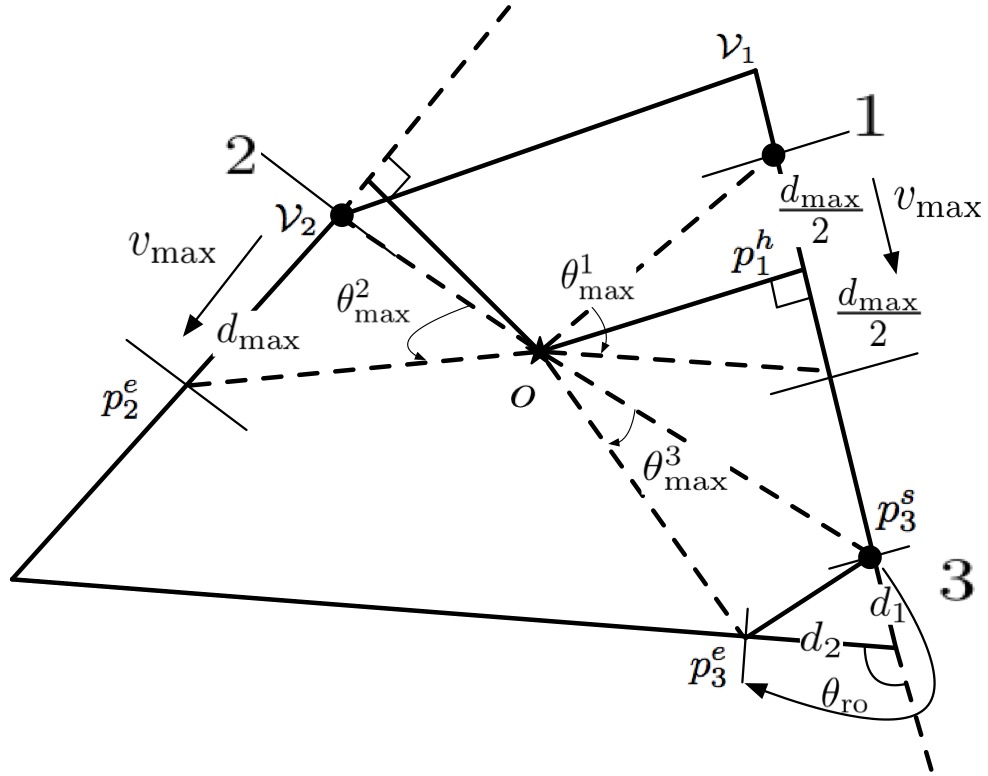}
\caption{Computing $\omega_{\max}$.\label{fig:omegamax}}
\end{figure}

\noindent\textbf{Case 1.} The orthogonal projection of the target on $l\mathcal{E}^{i}$ lies within $\mathcal{E}^{i}$.

$\omega_{\max}^{1,\mathcal{E}_{i}}$ corresponds to the case where the robot covers a maximum angular distance with respect to the target in one time step. Thus, the robot should be as close as possible to the target when it moves $d_{\max}$ on the edge. $\omega_{\max}^{1,\mathcal{E}_{i}}$ can be calculated as  $\omega_{\max}^{1,\mathcal{E}_{i}}=\frac{\theta_{\max}^{1,\mathcal{E}_{i}}}{\Delta t}$ giving $
\omega_{\max}^{1}=\min_{\mathcal{E}_{i}\in \mathcal{E}}\{\omega_{\max}^{1,\mathcal{E}_{i}}\}$. Here $\theta_{\max}^{1,\mathcal{E}_{i}}$ is the angle shown in Figure~\ref{fig:omegamax}. \rev{Since we assume $\partial\mathcal{Q}$ and the target's position (or its estimate) are known, we can calculate the length of perpendicular bisector  $|p_{1}^{h}o|$. Then, $\theta_{\max}^{1,\mathcal{E}_{i}}$ can be computed by applying Pythagorean theorem for $|p_{1}^{h}o|$ and $d_{\max}/2.$}

\noindent\textbf{Case 2.} The orthogonal projection of the target on $l\mathcal{E}^{i}$ lies outside $\mathcal{E}^{i}$.

Similar to case 1, the $\omega_{\max}$ can be computed as $\omega_{\max}^{2,\mathcal{E}_{i}}=\frac{\theta_{\max}^{2,\mathcal{E}_{i}}}{\Delta t}$ where $\omega_{\max}^{2}=\min_{\mathcal{E}_{i}\in \mathcal{E}}\{\omega_{\max}^{2,\mathcal{E}_{i}}\}$. Here, $\theta_{\max}^{2,\mathcal{E}_{i}}$ is the larger of the two angles made by the pair of lines joining target and either of the endpoint of $\mathcal{E}_{i}$ and joining target and a point $d_{\max}$ away from the corresponding endpoint. \rev{As one example in Figure~\ref{fig:omegamax}, robot starts from one vertex of the $\partial\mathcal{Q}$, $\mathcal{V}_2$, and travels $d_{\max}$ distance till the ending point $p_{2}^{e}$. Since $\partial\mathcal{Q}$ is known, we know the position of its vertex $\mathcal{V}_2$, and can compute the position of ending point, $p_{2}^{e}$ by knowing $|\mathcal{V}_2 p_{2}^{e}| = d_{\max}$ and $\partial\mathcal{Q}$. Then we can compute $|p_{2}^{e}o|$ and  $|\mathcal{V}_2o|$. By using the law of cosines, we can compute $\theta_{\max}^{2,\mathcal{E}_{i}}$, then obtain $\omega_{\max}^{2,\mathcal{E}_{i}}$.

}

\noindent\textbf{Case 3.} Robot crosses a vertex $\mathcal{V}_{i}$ within one time step.

We assume that within one time step $\Delta t$, the robot moves $d_{1}^{\mathcal{V}_{i}}$ on one edge and $d_{2}^{\mathcal{V}_{i}}$ on another edge. Since the robot must spend some time at the vertex turning in-place, we have $d_{1}^{\mathcal{V}_{i}}+d_{2}^{\mathcal{V}_{i}}<d_{\max}$. \rev{We calculate $d_2$ by
$$\frac{(d_1+d_2)}{v_{\max}}+\frac{\theta_{\mathrm{ro}}^{\mathcal{V}_{i}}}{\omega_{\mathrm{ro}}}=\Delta t.$$ where $\theta_{\mathrm{ro}}^{\mathcal{V}_{i}}$ and $\omega_{\mathrm{ro}}$ denote the rotation angle at the vertex $\mathcal{V}_{i}$ and rotational speed of the robot, which are known. Then, we show the calcualtion of  $\theta_{\max}^{3,\mathcal{V}_{i}}$ by an example in Figure~\ref{fig:omegamax} where robot starts from $p_{3}^{s}$, crosses the vertex by rotating $\theta_{\mathrm{ro}}$ and ends at $p_{3}^{e}$. Once we know $d_1$, $d_2$, $\theta_{\mathrm{ro}}$, we can use the law of cosines to calculate $|p_{3}^{s}p_{3}^{e}|$. Then apply cosine law again to $|p_{3}^{s}p_{3}^{e}|$, $|p_{3}^{s}o|$, and $|p_{3}^{e}o|$, we can compute $\theta_{\max}^{3}$. We use this procedure to calculate  $\theta_{\max}^{3,\mathcal{V}_{i}}$ at the vertex $\mathcal{V}_{i}$. Thus, the $\omega_{\max}^{3,{\mathcal{V}_{i}}}$ can be calculated by
$$\omega_{\max}^{3,\mathcal{V}_{i}}=\frac{\theta_{\max}^{3,\mathcal{V}_{i}}}{\Delta t}.$$
}
Then $\omega_{\max}^{3}$ can be specified as 

\begin{equation*}
\omega_{\max}^{3}=\min_{(\mathcal{V}_{i},d_1)}\{\omega_{\max}^{3,\mathcal{V}_{i}}\}.
\end{equation*}
\noindent Where $\mathcal{V}_{i} \in \mathcal{V}$ and $0 \le d_1 \leq \Delta t-\frac{\theta_{\mathrm{ro}}^{\mathcal{V}_{i}}}{\omega_{\mathrm{ro}}}$.

Finally, $\omega_{\max}$ can be computed as 

\begin{equation}
\omega_{\max}=\min\{\omega_{\max}^{1},\omega_{\max}^{2},\omega_{\max}^{3}\}.
\end{equation}

If $d_{\max}$ is larger than the length of one edge or the sum of lengths of several edges of the polygon, $\omega_{\max}$ can also be obtained using a similar procedure.

\section*{Acknowledgment}

This material is based upon work supported by the National Science Foundation under Grant numbers 1566247 and 1637915.

\bibliographystyle{IEEEtran.bst} 
\bibliography{refs}

\begin{IEEEbiography}[{\includegraphics[width=1in,height=1.25in,clip,keepaspectratio]{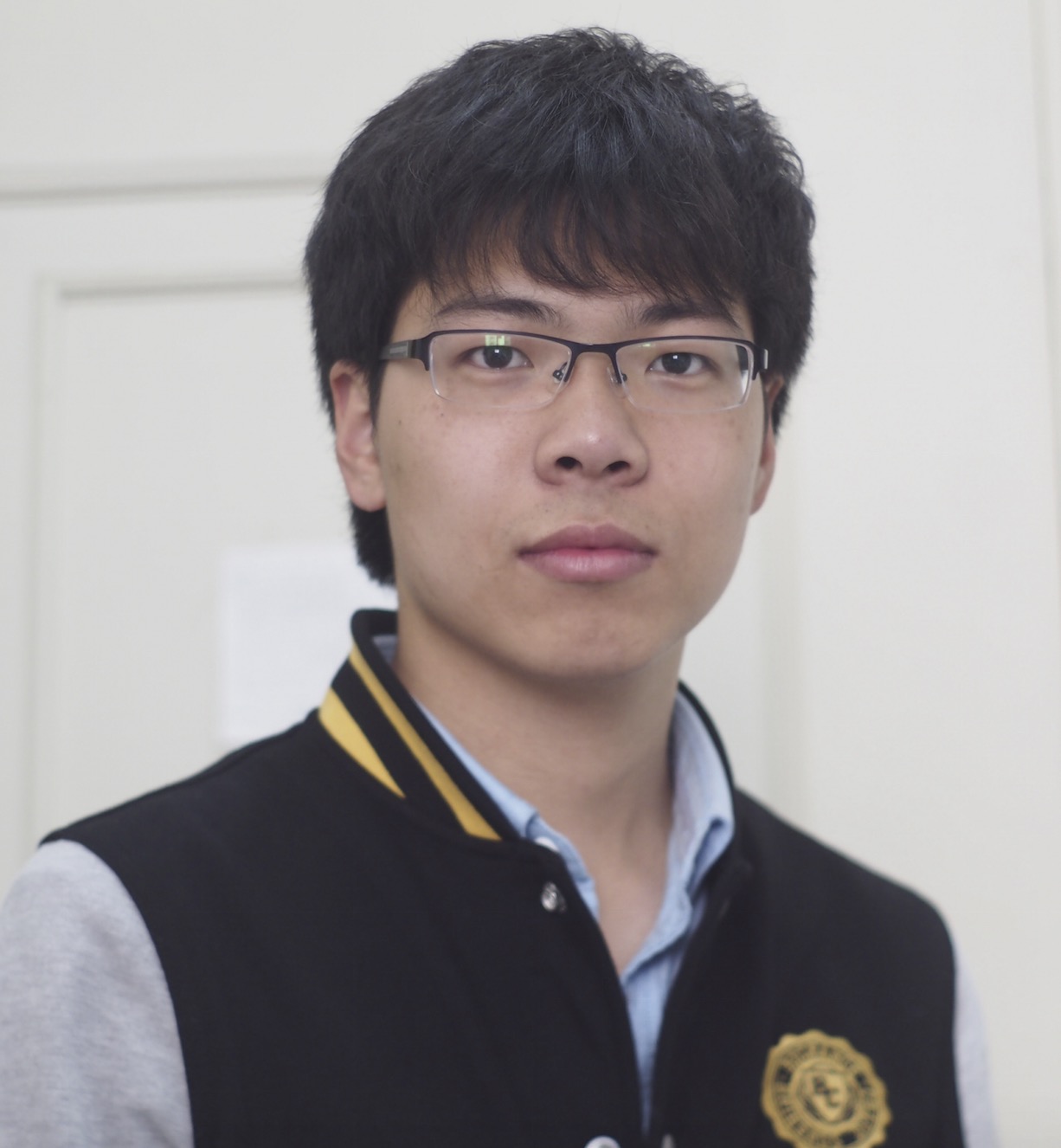}}]{Lifeng Zhou}  received the
B.S. degree in Automation from Huazhong University of Science and Technology, Wuhan, China, in 2013, the M.Sc.
degree in Automation from Shanghai Jiao Tong University, Shanghai, China, in 2016. He
is currently pursuing the Ph.D. degree in Electrical and Computer Engineering, Virginia Tech,
Blacksburg, VA, USA. 

His research interests include multi-robot coordination, event-based control, sensor assignment, and risk-averse decision making.
\end{IEEEbiography}

\begin{IEEEbiography}[{\includegraphics[width=1in,height=1.25in,clip,keepaspectratio]{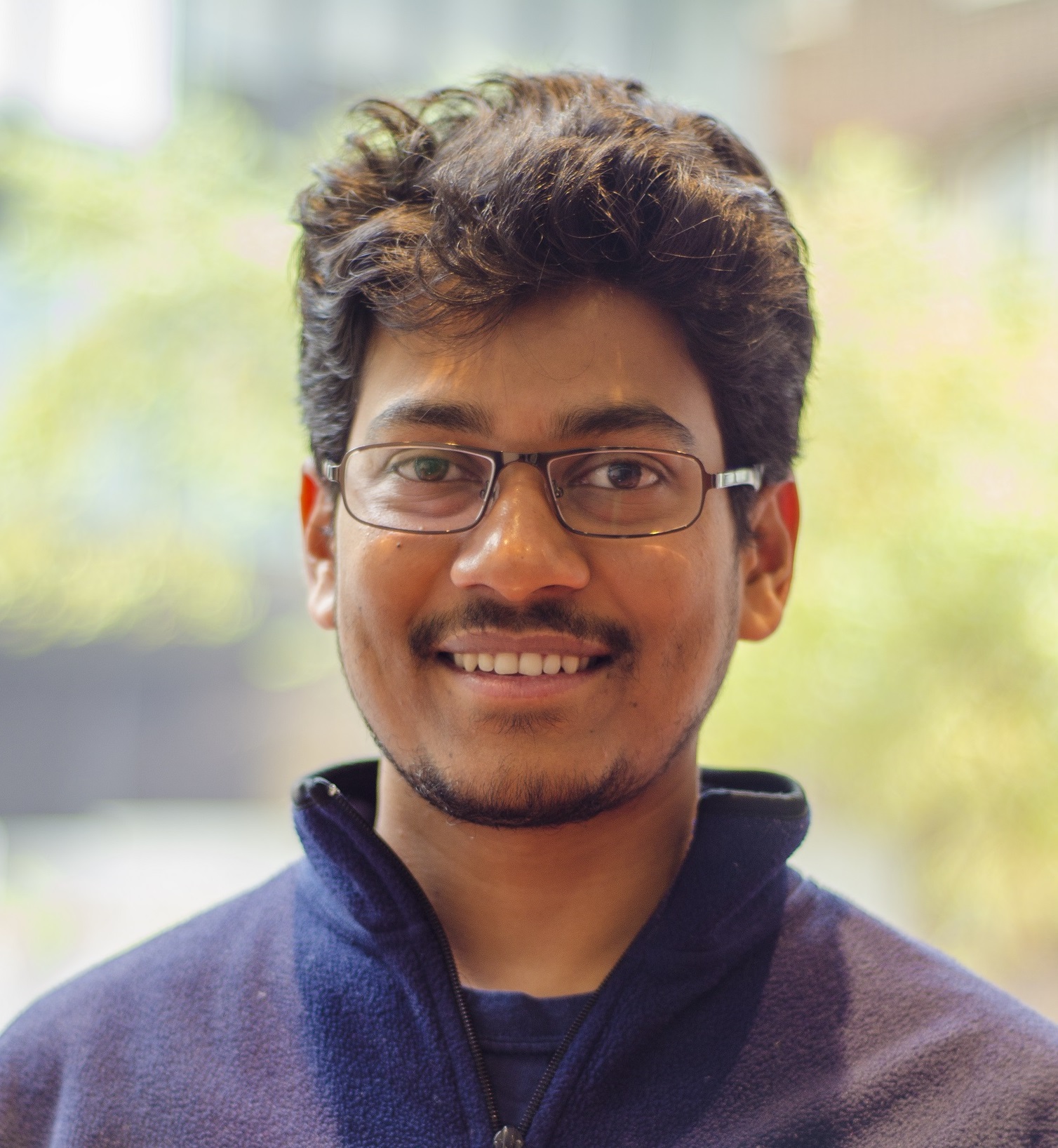}}]{Pratap Tokekar} is an Assistant Professor in the Department of Electrical and Computer Engineering at Virginia Tech. Previously, he was a Postdoctoral Researcher at the GRASP lab of University of Pennsylvania. He obtained his Ph.D. in Computer Science from the University of Minnesota in 2014 and Bachelor of Technology degree in Electronics and Telecommunication from College of Engineering Pune, India in 2008. He is a recipient of the NSF CISE Research Initiation Initiative award. His research interests include algorithmic and field robotics and  applications to precision agriculture and environmental monitoring.
\end{IEEEbiography}

\end{document}